\numberwithin{equation}{section}
\newtheorem{theorem}{Theorem}
\newtheorem{assumption}{Assumption}
\newtheorem{corollary}{Corollary}
\newtheorem{proposition}{Proposition}
\newtheorem{lemma}{Lemma}
\theoremstyle{definition}
\renewcommand{\hat}{\widehat}
\renewcommand{\tilde}{\widetilde}
\begin{document}

\title{\Large High-Dimensional Tail Index Regression\thanks{\footnotesize\setlength{\baselineskip}{4.4mm}First arXiv date: Mar 02 2024. We thank Yi He, Yuan Liao, Denis Chetverikov, and participants at the ESIF Economics and AI+ML Meeting for their very helpful comments and suggestions.}}
\author{ Yuya Sasaki\thanks{\footnotesize\setlength{\baselineskip}{4.4mm} Yuya Sasaki: {yuya.sasaki@vanderbilt.edu}.  Brian and Charlotte Grove Chair and Professor of Economics, Department of Economics, Vanderbilt University, VU Station B \#351819, 2301 Vanderbilt Place, Nashville, TN 37235-1819.\smallskip} 
\and Jing Tao\thanks{\footnotesize\setlength{\baselineskip}{4.4mm} Jing Tao: {jingtao@uw.edu}. Associate Professor of Economics, Department of Economics, University of Washington, Box 353330, Savery 305, University of Washington, Seattle, WA 98195-3330.\smallskip} 
\and Yulong Wang\thanks{\footnotesize\setlength{\baselineskip}{4.4mm} Yulong Wang: {yuw925@lehigh.edu}. Associate Professor of Economics, Department of Economics, Lehigh University, Bethlehem, PA 18015.} }

\date{}
\maketitle
\begin{abstract}
Motivated by the empirical observation of power-law distributions in the credits (e.g., ``likes'') of viral posts in social media, we introduce a high-dimensional tail index regression model and propose methods for estimation and inference of its parameters.
First, we propose a regularized estimator, establish its consistency, and derive its convergence rate. 
Second, we debias the regularized estimator to facilitate inference and prove its asymptotic normality. 
Simulation studies corroborate our theoretical findings.
We apply these methods to the text analysis of viral posts on X (formerly Twitter).

{\small {\ \ \ \newline
\textbf{Keywords: } high-dimensional data, social media, tail index, power law, text data analysis.} \newline }

\end{abstract}

\newpage

\section{Introduction}

A large literature is dedicated to tail features of distributions -- see \citet{deHaan07} and \citet{Resnick07} for reviews and references.  As a common assumption, 
a distribution $F$ \textit{regularly varies with exponent} $\alpha$ if its tail is well approximated by a Pareto distribution with shape parameter $\alpha$. 
This regularity condition implies that common tail features of interest, such as tail probabilities, extreme quantiles, and tail conditional expectations, can be expressed in terms of $\alpha$.  
Estimates of these features are obtained by plugging in estimated values of $\alpha$. 
The literature contains numerous suggestions along these lines, some of which are reviewed in the surveys cited below.

\begin{figure}[b]
\centering
\includegraphics[width=9.5cm]{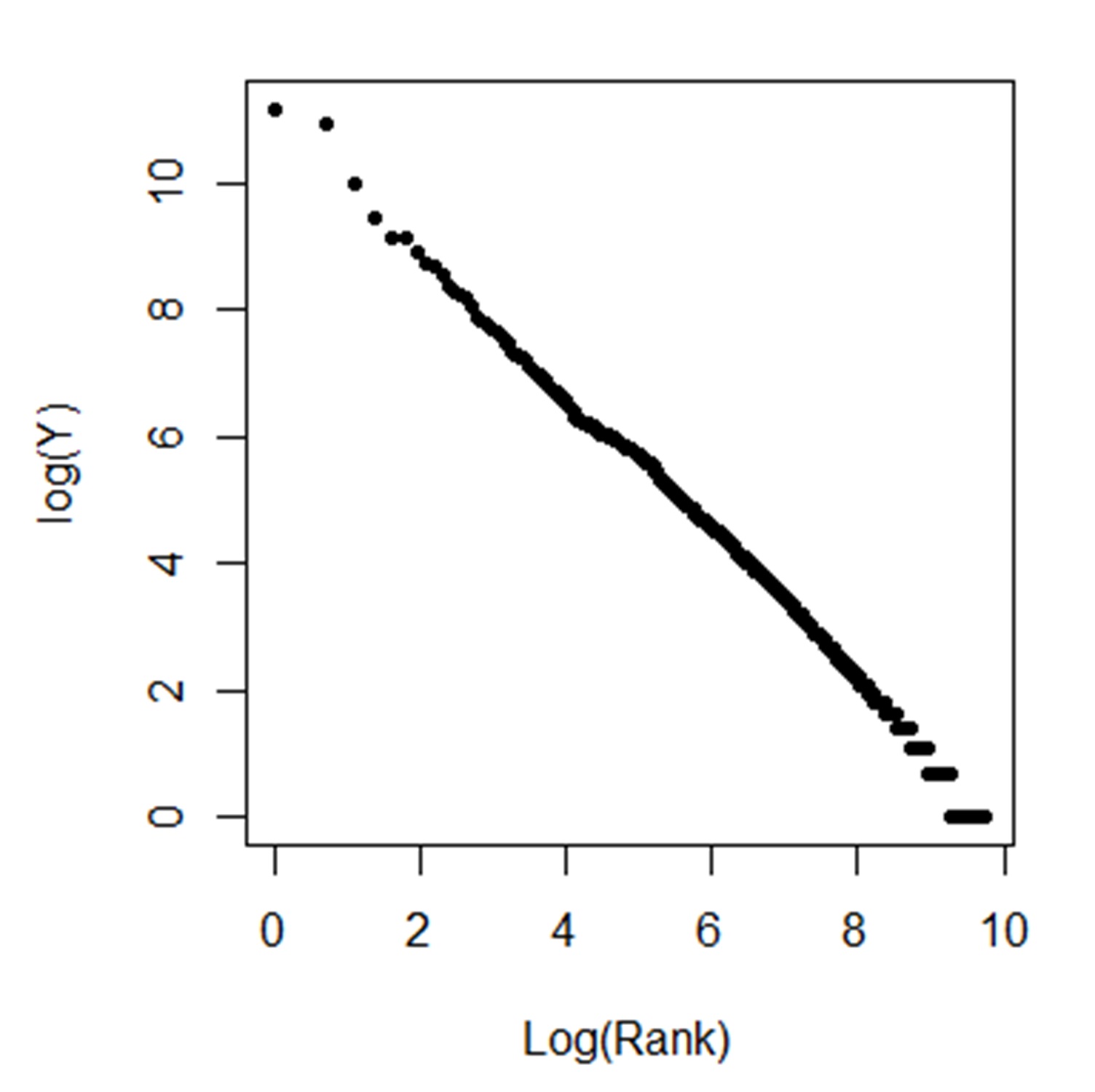}
\caption{\small The log-log plot for the distribution of ``likes'' in a set of posts about LGBTQ$+$ in X. The horizontal axis plots the rank of $Y$ while the vertical axis plots $\log(Y)$.}
\label{fig:intro:loglog}
\end{figure} 

Consider the distribution of credits in social media to motivate this framework in contemporary applications.
Figure \ref{fig:intro:loglog} displays the so-called \textit{log-log plot} for the distribution of the number $Y$ of ``likes'' in LGBTQ$+$ posts in X (formerly Twitter). 
If the distribution of $Y$ is Pareto with exponent $\alpha$, the log-log plot would appear linear, as in this figure, with its slope indicating $-1/\alpha$.
This observation motivates us to utilize the aforementioned technology for analyzing viral posts on social media.
 
We emphasize that the Pareto tail is not unique to our dataset. 
It has also been documented and explained for numerous economic, finance, and insurance datasets, including city sizes, firm sizes, stock returns, and natural disasters.  
See \citet{gabaix2009,gabaix2016JEP} for examples.
In text analysis and linguistics, Zipf's law states that, given a large sample of words, the frequency of any word is inversely proportional to its rank in the frequency table. 
This empirical finding can also be characterized by the Pareto distribution with $\alpha \approx 1$ \citep[e.g.,][p.139]{fagan2010introduction}.
 
Suppose that the conditional distribution of $Y$ given $X$ has an approximately Pareto tail with shape parameter $\alpha(X)$ depending on covariates $X$. 
We are interested in the effect of $X$ on the tail features of $Y$ through $\alpha(X)$. 
One family of existing methods imposes a parametric structure on $\alpha(X)$. 
\citet{wang2009tail} propose a tail index regression (TIR) method by modeling  $\alpha(X) = \exp(X^{\intercal}\theta_0)$ and estimating the pseudo-parameter $\theta_0$. 
\citet{nicolau2023tail} extend the TIR method to accommodate weakly dependent data.
\citet{LiLengYou2020} consider the semiparametric setup $\alpha(X)=\alpha(X_1,X_2) = \exp(X_1^{\intercal}\theta_0 + \eta(X_2))$ for some smooth function $\eta$. 
By combining $\alpha(X) = \exp(X^{\intercal}\theta_0)$ with a power transformation of $Y$, \citet{WangLi2013} study the estimation of conditional extreme quantiles of $Y$ given $X$. 
Another family of existing methods considers fully nonparametric models and local smoothing \citep[e.g.,][]{GardesGirard2010, GardesArmelleAntoine2012, DaouiaGardesGirardLekina2010, DaouiaGardesGirard2013}.

Common to all of these existing approaches is that $X$ is assumed to be of a fixed and low dimension.  
In the current paper, we relax this restriction by allowing the dimension of $X$ to increase with the sample size and to even exceed the sample size. 
Our empirical question related to Figure \ref{fig:intro:loglog} motivates this high-dimensional model. 
Specifically, let $Y_i$ denote the number of ``likes'' of the $i$-th post, and let $X_i$ denote a long vector of binary indicators of whether this post contains a list of keywords. 
Smaller values of $\alpha(X)$ imply that using the words indicated by such $X$ entails more extreme numbers of ``likes.'' 
Essentially, we are asking how to write viral posts.
A high-dimensional setup is crucial since the number of keywords is huge.

To address this question, we develop a novel high-dimensional tail index regression (HDTIR) method. 
By modifying the TIR method \citep{wang2009tail}, we propose an $\ell_1$-regularized maximum likelihood estimator (MLE).
For inference, we further propose debiasing the regularized estimator and establishing its asymptotic normality.    
Two alternative methods are provided for debiased estimation and inference: one based on sample splitting and the other based on cross-fitting.


To the best of our knowledge, the current paper is the first study of estimation and inference theory for the high-dimensional tail index regression model, which constitutes the key contribution of the current paper. 
Our estimation and inference problems are related to the extensive literature on high-dimensional generalized linear models \citep[e.g.,][among others]{van2008high, negahban2009unified, HuangZhang2012, van2014asymptotically, zhang2014confidence, belloni2018uniformly, chernozhukov2018double, cai2023statistical}. 
None of the aforementioned papers focuses on tail index regression. 

Our work also contributes to the vast literature of text analysis.
$\ell_1$-regularized estimation has been applied to high-dimensional text regressions \citep[e.g.,][]{taddy2013multinomial} in economics.
Nonetheless, there is no method tailored to analyzing tail features relevant to distributions of credits, such as the number of ``likes'' for viral posts in social media.
Our proposal addresses this gap in the literature as well.
 
The current paper is also related to the recent literature on shrinkage methods with heavy-tailed data \citep[e.g.,][]{wong2020lasso, fan2021shrinkage, zhu2021taming, babii2023machine}. 
These methods focus on modeling the conditional mean $\mathbb{E}[Y|X]$ using all $n$ observations. 
The heavy tail feature typically leads to a slower convergence rate, i.e., from $\log{p}$ to a polynomial of $p$, where $p$ denotes the dimension of $X$. 
The asymptotic distributions become more complicated, as does the subsequent statistical inference. 
In contrast, our method relies on the regular variation assumption and focuses on the conditional tail index of $Y$. 
This tail feature requires using only the tail $n_0 < n$ observations, but restores the conventional $\log{p}$ rate. 
Note that our $p$ may increase with $n_0$, and we leave its relation with $n$ unspecified. 

Finally, also closely related are the literature on extremal quantile regressions \citep{chernozhukov2017extremal} and extremal treatment effects \citep{d2018extremal, zhang2018extremal, deuber2024estimation}. The tail index plays a crucial role in the estimation and inference of these parameters.

\medskip\noindent
{\bf Organization:}
Section \ref{sec:main} presents the method and theory of the HDTIR. 
Monte Carlo simulations in Section \ref{sec:simulation} demonstrate that the proposed HDTIR has excellent finite-sample performance. 
We apply the method to text analyses of viral posts in X in Section \ref{sec:application}. 
Extensions, mathematical proofs, and technical details are relegated to the appendix.

\medskip\noindent
{\bf Notation:}
Throughout the paper, we use the following notation. For a $p$-dimensional vector $X=(X_1,\dots,X_p)^{\intercal} \in \mathbb{R}^p$, we use $\|X\|_q=(\sum_{i=1}^{p}|X_i|^q)^{1/q}$ to denote the vector $\ell_q$ norm for $1\leq q<\infty$, and $\|X\|_{\infty}=\max_{1\leq i\leq q}|X_i|$ to denote the vector maximum norm.  For a set $S\subseteq\{1,\dots,p\}$,
let ${X}_{S}=\{X_{j}:j\in S\}$ and $S^{c}$ be the complement of $S$. 
For a $p\times q$ matrix $A=(a_{i_1i_2})\in\mathbb{R}^{p\times q}$, we use $\|A\|_{1}=\sum_{i_1=1}^{p}\sum_{i_2=1}^{q}|a_{i_1,i_2}|$, $\|A\|_{2} = \|A\|_F = \{\sum_{i_1=1}^{p}\sum_{i_2=1}^{q}(a_{i_1,i_2})^2\}^{1/2}$ and $\|A\|_{\infty}=\max_{1\leq i_1\leq p,1\leq i_2\leq q}|a_{i_1i_2}|$ to denote the element-wise $\ell_1$, $\ell_2$ and $\ell_{\infty}$ norms, respectively. Let $\|A\|_{\ell_d}=\sup_{X\in\mathbb{R}^{q},|X|_d\leq 1}|AX|_d$ denote the matrix operator norm for $1\leq d\leq \infty$. More specifically, the operator $\ell_1$, $\ell_2$ and $\ell_{\infty}$ norms are denoted by 
$\|A\|_{\ell_1}=\max_{1\leq i_2\leq q}\sum_{i_1=1}^{p}|a_{i_1i_2}|$, $\|A\|_{\ell_2}=\max_{1\leq i_2\leq q}\{\sum_{i_1=1}^{p}(a_{i_1i_2})^2\}^{1/2}$ and $\|A\|_{\ell_\infty}=\max_{1\leq i_1\leq p}\sum_{i_2=1}^{q}|a_{i_1i_2}|$, respectively. 
Let $\lambda_{\min}(A)$ and $\lambda_{\max}(A)$ denote the minimum and maximum eigenvalues of the matrix $A$, respectively. 
Let $I_p$ be the $p\times p$ identity matrix. 
For two positive sequences $\{a_n\}$ and $\{b_n\}$, $a_n\gtrsim b_n$ means that $a_n>c b_n$ for all $n$ large enough and some constant $c$, $a_n\lesssim b_n$ if $b_n \gtrsim a_n$ holds, and $a_n \asymp b_n$ means that $a_n \lesssim b_n$ and $b_n \lesssim a_n$. 
Moreover, $a_n \ll b_n$ means $a_n/b_n \rightarrow 0$. 
For any random variables $X_1,\dots,X_n$ and function $h(\cdot)$, let $\mathbb{E}_n\{h(U_i)\}=\sum_{i=1}^{n}h(U_i)/n$ be the empirical average of $\{h(U_i)\}_{i=1}^{n}$. Let $\dot{h}(\cdot)$ and $\Ddot{h}(\cdot)$ be the first and second-order derivatives of a univariate function, and let $\nabla$ denote the operator for gradient or subgradient.

\section{The High Dimensional Tail Index Regression}\label{sec:main}
\subsection{Regularized Estimation}

Let $\{(X_{i},Y_{i})\}_{i=1}^n$ be $n$ copies of $\{Y,X\}$, where $Y$ is a real-valued response of interest and $X$ is a $p$-dimensional random vector of explanatory factors with possibly $p = p_n \rightarrow \infty$ as the sample size $n$ diverges to $\infty$. 
We are interested in modeling the effect of $X$ on the tail feature of the distribution of $Y$. 
Without loss of generality, we focus on the right tail and collect observations such that $Y_{i}>w_n$ for some $w_n$.
Let $n_0:=\sum_{i=1}^{n}1 [Y_{i}\geq w_n]$ denote the effective sample size, and rearrange the indices such that $1 [Y_i\ge w_n]=1$ for all $i \in \{1,\cdots,n_0\}$.
The following assumptions describe our model.

\begin{assumption}[Conditional Pareto Tail]\label{assum:shape}
(i) $\{Y_i,X_i\}$ is i.i.d. with its distribution satisfying
\begin{equation}\label{eq:Pareto}
\mathbb{P}\left( \left. Y>y\right\vert Y>w_n, X=x\right) = \left(\frac{y}{w_n}\right)^{-\alpha \left( X\right) }
\end{equation}
for all $y>0$ and a sufficiently large $w_n$, 
where $\alpha \left( X\right):=\exp \left( X^{\intercal }\theta_{0}\right)\geq \underline{\alpha}>0$ uniformly.
\end{assumption}

\begin{assumption}[Compact Support]\label{assum:support}
For each $j=1,\dots,p$, $X_{i,j}$ has a compact support $\mathcal{X}_{j}$ with $\sup_{x\in\mathcal{X}_{j}}f_{X_{i,j}|Y_i>w_n}\left( x\right) \leq \bar{f}<\infty$.
\end{assumption}

Assumption \ref{assum:shape} imposes the restriction that $Y$ conditional on $X$ has a Pareto distribution with exponent $\alpha(X)$ in the tail region $\{y\in\mathbb{R}:y>w_n\}$. 
Such a Pareto tail has been documented in numerous empirical datasets as emphasized in the introductory section. 
See \citet{gabaix2009,gabaix2016JEP} for comprehensive reviews. 

This condition could be relaxed by multiplying a slowly varying function $\mathcal{L}(t)$ on the right-hand side of \eqref{eq:Pareto} such that $\mathcal{L}(t)\rightarrow 1$ as $t\rightarrow \infty$ \citep[e.g.,][]{wang2009tail}. 
With this said, there are two benefits of imposing \eqref{eq:Pareto}.
First, assuming the exact conditional Pareto tail substantially simplifies the theory, especially given that we focus on high-dimensional $X_{i}$. 
Second, the empirical strategy remains the same when we select a sufficiently large $w_n$ so that the higher-order approximation bias from $\mathcal{L}(t)$ becomes asymptotically negligible. 
This is also commonly implemented in the literature \citep[e.g.,][]{drees1998estimate, drees1998smooth}. 
More discussions about the effect of $w_n$ can be found in \citet[][Section 3]{deHaan07}, among many others. 
We could also relax the i.i.d. condition at the cost of more sophisticated theory, but we focus on this sampling assumption to explicate our main contribution concerning the high-dimensional setup. 

Assumption \ref{assum:support} requires each coordinate of $X_i$ to have a compact support and uniformly bounded density. 
This is coherent with our empirical application, in which $X_i$ is a vector of binary indicators of keywords.

We now introduce our high-dimensional tail index regression (HDTIR) estimators. 
Define the negative log-likelihood function $\ell_{n_0}$ of $Y$ conditional on $Y\geq w_n$ by 
\begin{equation}\label{def:likelihood}
\ell_{n_0}(\theta)=\frac{1}{n_0}\sum_{i=1}^{n_0}\left\{ \exp\left(X_{i}^{\intercal}\theta \right)\log\left(Y_{i}/w_n\right)-X_{i}^{\intercal}\theta\right\}.
\end{equation}
Our regularized HDTIR estimator is given by
\begin{equation}
\hat{\theta}=\arg\min_{\theta}\left\{ \ell_{n_0}(\theta)+\lambda_{n_{0}}\|\theta\|_{1}\right\}.\label{eq:lasso}
\end{equation}
We denote the sparsity level of the parameter by $s_0$, i.e., $\|\theta_{0}\|_{0}\leq s_{0}$. 
Let $\Sigma_{w_n}=\mathbb{E}\left[X_{i}X_{i}^{\intercal}|Y_{i}>w_n\right]$. 
The following theorem states the convergence rate of this regularized estimator. 

\begin{theorem} \label{Thm:consistency}
Suppose that Assumptions \ref{assum:shape} and \ref{assum:support} hold, and $\|\theta_{0}\|_{2}\leq C_1$. Suppose that  $C_{2}^{-1}\leq\lambda_{\min}(\Sigma_{w_n})\leq\lambda_{\max}\left(\Sigma_{w_n}\right)\leq C_{2}$ holds for constants $C_{1}>0$ and $C_{2}>1$ independent of $n$, $p$, and $w_n$. Let $\lambda_{n_{0}}=c\sqrt{(\log p)/n_{0}}$ for some constant $c>0$.
If $s_{0}\lesssim n_{0}/(\log p)$, then with probability approaching one,
\begin{equation*}
\|\hat{\theta}-\theta_{0}\|_{1}\lesssim\sqrt{\frac{s_{0}^{2}(\log p)}{n_{0}}}, \quad \|\hat{\theta}-\theta_{0}\|_{2}\lesssim\sqrt{\frac{s_{0}(\log p)}{n_{0}}}, \quad \mbox{and}\quad \frac{1}{n_{0}}\sum_{i=1}^{n_{0}}\left[X_{i}^{\intercal}\left(\hat{\theta}-\theta_{0}\right)\right]^{2}\lesssim\frac{s_{0}(\log p)}{n_{0}}.
\end{equation*}
\end{theorem}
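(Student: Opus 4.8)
The plan is to follow the by-now-standard program for $\ell_1$-penalized $M$-estimation (basic inequality $\to$ deviation bound on the score $\to$ cone condition $\to$ restricted strong convexity $\to$ rate), but with the work concentrated on the two steps that are special to this likelihood. First I would record the analytic structure of the loss. Writing $v_i=\log(Y_i/\omega_n)$, the per-observation score is $\{\exp(X_i^\top\theta)v_i-1\}X_i$ and the Hessian is $\exp(X_i^\top\theta)v_i\,X_iX_i^\top\succeq 0$, so $\ell_{n_0}$ is convex. Assumption \ref{assum:shape} says that, conditionally on $Y>\omega_n$ and $X$, the variable $v=\log(Y/\omega_n)$ is exponential with rate $\alpha(X)=\exp(X^\top\theta_0)$; hence $\exp(X^\top\theta_0)v=\alpha(X)v$ is a standard $\mathrm{Exp}(1)$ variable whose law does not depend on $X$. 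Two consequences I would use repeatedly: (i) the score is mean zero at $\theta_0$, since $\mathbb{E}[\alpha(X)v-1\mid X]=0$; and (ii) the population Fisher information equals $\mathbb{E}[\exp(X^\top\theta_0)v\,XX^\top]=\mathbb{E}[XX^\top\mid Y>\omega_n]=\Sigma_{\omega_n}$, because the conditional mean of $\alpha(X)v$ is exactly $1$. Thus the curvature at the truth is precisely the well-conditioned matrix $\Sigma_{\omega_n}$ whose eigenvalues lie in $[C_2^{-1},C_2]$.

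Next I would bound the score. The $j$-th coordinate of $\nabla\ell_{n_0}(\theta_0)$ averages the mean-zero terms $\{\alpha(X_i)v_i-1\}X_{ij}$; since $\alpha(X_i)v_i-1$ is a centered $\mathrm{Exp}(1)-1$ variable (sub-exponential) and $X_{ij}$ is bounded by Assumption \ref{assum:support}, each summand is sub-exponential. A Bernstein bound together with a union bound over the $p$ coordinates gives $\|\nabla\ell_{n_0}(\theta_0)\|_\infty\lesssim\sqrt{(\log p)/n_0}$ with probability approaching one, so the choice $\lambda_{n_0}=c\sqrt{(\log p)/n_0}$ dominates the score, say $\|\nabla\ell_{n_0}(\theta_0)\|_\infty\le\lambda_{n_0}/2$. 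Feeding this into the basic inequality $\ell_{n_0}(\hat\theta)+\lambda_{n_0}\|\hat\theta\|_1\le\ell_{n_0}(\theta_0)+\lambda_{n_0}\|\theta_0\|_1$ and using decomposability of the $\ell_1$ norm over $S=\operatorname{supp}(\theta_0)$ yields, in the usual way, both the cone condition $\|\delta_{S^c}\|_1\le 3\|\delta_S\|_1$ for $\delta:=\hat\theta-\theta_0$ and the inequality $D(\hat\theta,\theta_0)+\tfrac{\lambda_{n_0}}{2}\|\delta_{S^c}\|_1\le\tfrac{3\lambda_{n_0}}{2}\|\delta_S\|_1$, where $D$ denotes the Bregman divergence of $\ell_{n_0}$ at $\theta_0$.

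The crux --- and the step I expect to be the main obstacle --- is establishing restricted strong convexity for this non-quadratic loss. A direct computation gives the clean exact form $D(\theta_0+\delta,\theta_0)=\tfrac1{n_0}\sum_i\alpha(X_i)v_i\,\{\exp(X_i^\top\delta)-1-X_i^\top\delta\}$, a sum of nonnegative terms with random weights $w_i:=\alpha(X_i)v_i$ that are i.i.d. $\mathrm{Exp}(1)$ and, crucially, independent of $X_i$. The difficulty is twofold: the factor $\exp(u)-1-u$ is only locally quadratic, and the weights $w_i$ are unbounded. I would avoid any hard margin bound on $X_i^\top\delta$ by using the elementary inequality $\exp(u)-1-u\ge c_0\min\{u^2,|u|\}$, valid for all real $u$, which reduces the problem to lower bounding $\tfrac1{n_0}\sum_i w_i\min\{(X_i^\top\delta)^2,|X_i^\top\delta|\}$ uniformly over the cone. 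I would then truncate the weights, replacing $w_i$ by $\tau\mathbf{1}\{w_i\ge\tau\}$ for a fixed $\tau$ (legitimate because $w_i$ is independent of $X_i$, with $\mathbb{P}(w_i\ge\tau)=e^{-\tau}$), and control the resulting bounded empirical process by a peeling/contraction argument, using boundedness of $X$ and concentration of the sample second-moment matrix on sparse directions. Because $\lambda_{\min}(\Sigma_{\omega_n})\ge C_2^{-1}$, this delivers, with probability approaching one, a bound of the form $D(\theta_0+\delta,\theta_0)\ge\kappa\,\tfrac1{n_0}\sum_i(X_i^\top\delta)^2\ge\kappa'\|\delta\|_2^2$ for all $\delta$ in the cone, the sparsity condition $s_0\lesssim n_0/(\log p)$ being exactly what keeps the lower-order $\|\delta\|_1^2$ remainder negligible on the cone.

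Finally I would combine the pieces. Restricted strong convexity turns $D(\hat\theta,\theta_0)\le\tfrac{3\lambda_{n_0}}{2}\|\delta_S\|_1\le\tfrac{3\lambda_{n_0}}{2}\sqrt{s_0}\|\delta\|_2$ into $\kappa'\|\delta\|_2^2\lesssim\lambda_{n_0}\sqrt{s_0}\|\delta\|_2$, hence $\|\delta\|_2\lesssim\sqrt{s_0}\,\lambda_{n_0}\asymp\sqrt{s_0(\log p)/n_0}$. The cone condition then gives $\|\delta\|_1\le 4\|\delta_S\|_1\le 4\sqrt{s_0}\|\delta\|_2\lesssim\sqrt{s_0^2(\log p)/n_0}$, and re-inserting into the Bregman bound yields the prediction-norm rate $\tfrac1{n_0}\sum_i(X_i^\top\delta)^2\lesssim D(\hat\theta,\theta_0)\lesssim\lambda_{n_0}\sqrt{s_0}\|\delta\|_2\lesssim s_0(\log p)/n_0$. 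A localization argument exploiting convexity confirms that $\hat\theta$ indeed lands in the region where restricted strong convexity was verified, completing the three claimed rates.
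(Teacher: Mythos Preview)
Your proposal is correct and follows essentially the same route as the paper's proof: a sub-exponential/Bernstein bound on $\|\nabla\ell_{n_0}(\theta_0)\|_\infty$, a restricted-strong-convexity lower bound obtained by reducing the curvature term to $n_0^{-1}\sum_i\min\{(X_i^\top\delta)^2,|X_i^\top\delta|\}$ over the cone, and then the standard combination. The only difference is packaging---the paper verifies these two ingredients as the hypotheses of an off-the-shelf result (their Proposition~1, taken from Huang--Zhang and Cai et al.) rather than writing out the basic-inequality argument directly---and your explicit observation that $w_i=\alpha(X_i)v_i\sim\mathrm{Exp}(1)$ is independent of $X_i$ is a clean point the paper leaves implicit.
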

Theorem \ref{Thm:consistency} establishes the convergence rate for the proposed regularized estimator. 
The condition $C_{2}^{-1}\leq\lambda_{\min}(\Sigma_{w_n})\leq\lambda_{\max}(\Sigma_{w_n})\leq C_{2}$ ensures that the conditional covariance matrix is well-behaved.
This result extends previous work on generalized linear models with canonical links \citep[e.g.,][]{negahban2009unified, GardesGirard2010} and those focusing on generalized linear models with binary outcomes \citep[e.g.,][]{van2008high, cai2023statistical}. 

As extensively discussed in the literature, $\hat{\theta}$ cannot be directly used to construct a confidence interval for $\theta_0$. In the next subsection, we introduce a debiased estimator to address this issue and facilitate statistical inference.

\subsection{Debiased Estimation and Inference}\label{sec:debias}
The preliminary regularized estimator exhibits a bias that is non-negligible relative to its stochastic variation.
Hence, its asymptotic distribution cannot be used directly for inference. In high-dimensional settings, it is standard to construct a debiased (de-sparsified) estimator to remove this bias and restore valid asymptotic inference. Motivated by this, the present section develops a debiased estimation and inference procedure. We adopt a cross-fitting scheme in which each bias-correction step uses a complementary subsample of that used to obtain the initial estimate. 

We first present the debiasing step.
Let $K$ be a fixed integer greater than 1.
Take a $K$-fold random partition $(\mathcal{D}_{k})^{K}_{k=1}$ of the indices $[n_0]=\{1,\dots,n_0\}$ so that the size of each fold $\mathcal{D}_{k}$ is $n_{k}=n_0/K$ for simplicity.  
For each $k=1,\dots,K$, define the set $\mathcal{D}^{c}_{k}=\{1,\dots,n_0\}\backslash \mathcal{D}_{k}$ of indices in the complement of the fold.

For each $k \in \{1,...,K\}$, we estimate $\hat{\theta}_{k}$ via \eqref{eq:lasso} by using the subsample $\mathcal{D}^{c}_{k}$, and estimate $\hat{u}_{j,k}$ via \eqref{alg:cross_fitting} by using the subsample $\mathcal{D}_{k}$.
Specifically,
\begin{align}\label{alg:cross_fitting}
\hat{u}_{j,k}  =&\arg\min_{u\in\mathbb{R}^p} u^{\intercal}\left(\frac{1}{n_k}\sum_{i\in \mathcal{D}_{k}}X_{i}X_{i}^{\intercal}\right)u\\
s.t. & \ \left\Vert \left(\frac{1}{n_k}\sum_{i\in \mathcal{D}_{k}}X_{i}X_{i}^{\intercal}\right)u-e_{j}\right\Vert _{\infty}\leq\gamma_{1n_{0}}\quad \mbox{and}\label{constraint1:cross}\\
     & \ \max_{i\in\mathcal{D}_{k}}|X_{i}^{\intercal}u|\leq\gamma_{2n_{0}}.\label{constraint2:cross}
\end{align} 
Then, for each $j \in \{1,\dots,p\}$,  let
\begin{align}\label{eq:theta_tilde_MLE}
    \tilde{\theta}_{j,k}:=\hat{\theta}_{j,k}-\frac{\hat{u}_{j,k}^{\intercal}}{n_{k}}\sum_{i\in \mathcal{D}_{k}}\left\{ \exp\left(X_{i}^{\intercal}\hat{\theta}_{k}\right)\log\left(Y_{i}/w_n\right)-1\right\} X_{i},
\end{align}
and define the debiased estimator by taking the average across the $K$ folds:
\begin{equation}
\tilde{\theta}_{j}:=\frac{1}{K}\sum^{K}_{k=1} \tilde{\theta}_{j,k}. \label{eq:dtheta_cross}
\end{equation}

We emphasize that the construction of $\hat{u}_{j}$ takes advantage of our Pareto tail approximation. Specifically, the existing literature usually constructs $\hat{u}_{j}$ using the Hessian, which does not involve $Y_i$. In our case, the score and Hessian of $\ell_{n_{0}}$ evaluated at $\hat\theta$ take the forms of
\begin{align*}
\dot{\ell}_{n_{0}}(\hat{\theta})=&\frac{1}{n_{0}}\sum_{i=1}^{n_{0}}\left\{ \exp\left(X_{i}^{\intercal}\hat{\theta}\right)\log\left(Y_{i}/w_n\right)-1\right\} X_{i}
\qquad\text{ and }\\
\ddot{\ell}_{n_{0}}(\hat{\theta})=&\frac{1}{n_{0}}\sum_{i=1}^{n_{0}}\log\left(Y_{i}/w_n\right)\exp\left(X_{i}^{\intercal}\hat{\theta}\right)X_{i}X_{i}^{\intercal},
\end{align*}
respectively. By conditioning on $X_i$ and estimating $\hat{\theta}$ using a different subsample, the debiasing term maintains conditional zero mean. 
However, our Hessian involves $\exp(X_{i}^{\intercal}\hat{\theta}) \log (Y_{i}/w_n )X_{i} X_{i}^{\intercal}$, which contains $Y_i$. 
To address this issue, we note that the conditional distribution of $\log (Y_{i}/w_n) $ given $X_i$ is approximately exponential with parameter $\exp (X_i^{\intercal}\theta_0)$. 
It follows that
\[
\mathbb{E}\left[ \exp (X_{i}^{\intercal}\hat{\theta}) \log (Y_{i}/w_n ) \} X_{i}X_{i}^{\intercal} \right] \approx \mathbb{E}\left[ X_{i}X_{i}^{\intercal} \right],
\]
which motivates our construction of $\hat{u}_{j}$. 

For the debiased estimator \eqref{eq:dtheta_cross}, we define the asymptotic variance estimator by
\begin{equation}\label{eq:var}
    \hat{V}_{1j} :=\frac{1}{K^{2}}\sum_{k=1}^{K}\frac{n_0}{n_k}\left[\hat{u}_{j,k}^{\intercal}\left(\frac{1}{n_{k}}\sum_{i\in \mathcal{D}_{k}}X_{i}X_{i}^{\intercal}\right)\hat{u}_{j,k}\right].
\end{equation}
An additional assumption is needed to derive the limiting distribution of the debiased estimator.

\begin{assumption}\label{assum:rate_tuning} For some constants
$c,c',c''>0$, (i) $\lambda_{n_0}=c\sqrt{(\log p)/n_0}=o(1)$, (ii) $\gamma_{1n_0}=c'\sqrt{(\log p)/n_0}=o(1)$,
and (iii) $\gamma_{2n_0}=c''\sqrt{\log n_0}$. 
\end{assumption}

We now state the asymptotic distribution for the debiased estimator $\tilde{\theta}_{j}$ defined in \eqref{eq:dtheta_cross}.

\begin{theorem}\label{Thm:normality2}Suppose that Assumptions \ref{assum:shape}-\ref{assum:rate_tuning} hold.
If $s_{0}\ll\frac{\sqrt{n_{0}}}{(\log p)^{3/2}}$, then
\[
\sqrt{n_0}\hat{V}_{1j}^{-1/2}\left(\tilde{\theta}_{j}-\theta_{0j}\right)\overset{d}{\rightarrow}\mathcal{N}(0,1)
\qquad\text{ as } n_0\rightarrow \infty.
\]
\end{theorem}

One could obtain a similar result in Theorem \ref{Thm:normality2} without cross fitting (or sample splitting). However, following the insight from \cite{cai2023statistical}, with the cross-fitting method we propose, the debiased estimator achieves asymptotic normality without requiring the inverse matrix 
\[ \frac{1}{n_0} \sum_{i=1}^{n_0} \log(Y_{i}/w_n) \exp(X_{i}^{\intercal}\hat{\theta}) X_{i}X_{i}^{\intercal} \]
to be weakly sparse, which relaxes a standard assumption in the literature  \citep[see, e.g.,][for linear regression models]{van2014asymptotically, javanmard2014confidence, zhang2014confidence}.

\subsection{Sample Splitting}\label{sec:sample_splitting}
Instead of cross fitting, we can alternatively split the samples so that the initial estimation and bias correction steps are conducted on independent datasets. For ease of writing, let the effective sample of size be $n_0$, which is randomly divided into two disjoint subsets $\mathcal{D}_{1}=\{(X_{i},Y_{i})\}_{i=1}^{n_{0}/2}$ and $\mathcal{D}_{2}=\{(X_{i},Y_{i})\}_{i=n_0/2+1}^{n_{0}}$. 
We use the subsample $\mathcal{D}_{2}$ to obtain $\hat{\theta}$ via \eqref{eq:lasso} and use the subsample $\mathcal{D}_{1}$ for the debiasing step described below. Let
\begin{align}
\hat{u}_{j} =&\arg\min_{u\in\mathbb{R}^{p}}u^{\intercal}\left[\frac{1}{n_0/2}\sum_{i=1}^{n_0/2} X_{i}X_{i}^{\intercal}\right]u\label{eq:u_hat} \\
s.t. \ \ & \left\Vert\frac{1}{n_0/2}\sum_{i=1}^{n_0/2} X_{i}X_{i}^{\intercal}u-e_{j}\right\Vert_{\infty}\leq\gamma_{1n_{0}} \quad\text{and}\label{eq:u_hat1}\\
 & \max_{1\leq i\leq n_0}|X_{i}^{\intercal}u|\leq\gamma_{2n_{0}},\label{eq:u_hat2}
\end{align}
where $\{e_{j}\}_{j=1}^{p}$ denotes the canonical basis of the Euclidean space $\mathbb{R}^{p}$,  and $\gamma_{1n_{0}}$ and $\gamma_{2n_{0}}$ satisfy the conditions stated in Assumption \ref{assum:rate_tuning}.

For each coordinate $j=1,\dots,p$, the debiased estimator is defined by 
\[
\tilde{\theta}_{j}:=\hat{\theta}_{j}-\frac{\hat{u}_{j}^{\intercal}}{n_{0}/2}\sum_{i=1}^{n_0/2}\left\{ \exp\left(X_{i}^{\intercal}\hat{\theta}\right)\log\left(Y_{i}/w_n\right)-1\right\} X_{i},
\]
where $\hat{u}_{j}\in\mathbb{R}^{p}$ is the projection direction constructed by \eqref{eq:u_hat}--\eqref{eq:u_hat2} using the subsample $\mathcal{D}_{1}$, while $\hat\theta$ derives from \eqref{eq:lasso} using the subsample $\mathcal{D}_{2}$. Let 
\[
\widehat{V}_{2j}:=\hat{u}_{j}^{\intercal}\left(\frac{1}{n_{0}/2}\sum_{i=1}^{n_0/2} X_iX^{\intercal}_i\right)\hat{u}_{j}.
\]
The following corollary states the asymptotic distribution of this estimator.

\begin{corollary}\label{Thm:normality1}Suppose that Assumptions \ref{assum:shape}-\ref{assum:rate_tuning} hold. If $s_{0}\ll\frac{\sqrt{n_{0}/2}}{(\log p)^{3/2}}$,
then
\[
\sqrt{n_0/2}\hat{V}_{2j}^{-1/2}(\tilde{\theta}_{j}-\theta_{0j})\overset{d}{\rightarrow}\mathcal{N}(0,1)
\qquad\text{ as } n_0\rightarrow \infty.
\]
\end{corollary}

\subsection{Extensions}\label{sec:extensions}

While our proposed method is based on the maximum likelihood principle, an alternative approach based on least squares is also possible. We develop this alternative methodology in Appendix~\ref{app:ols}.

Our framework can be further extended to accommodate large-scale online data, which is particularly relevant in settings such as social media applications where data are generated continuously and at scale. Appendix~\ref{app:online} presents this extension, where we employ a variant of stochastic gradient descent to efficiently process streaming data, thereby ensuring that the proposed methods remain scalable and practical for real-world applications.

\section{Simulation Studies}\label{sec:simulation}

In this section, we use simulated data to numerically evaluate the performance of our proposed method of estimation and inference.
Two designs for the $p$-dimensional parameter vector $\theta_0$ are employed:
\begin{align*}
&\text{1. Sparse Design:} && \theta_0 = (1.0,0.9,0.8,\ldots,0.2,0.1,0.0,0.0,0.0,\ldots)^\intercal, \qquad\text{and}
\\
&\text{2. Exponential Design:} && \theta_0 = (1.0,0.5,0.5^2,0.5^3,\cdots)^\intercal.
\end{align*}

A random sample of $(Y_i,X_i^\intercal)^\intercal$ is generated as follows.
Three designs for the $p$-dimensional covariate vector $X_i$ are employed:
\begin{align*}
&\text{1. Gaussian Design:} && X_i \sim \mathcal{N}(0, 0.1^2 \cdot I_p),
\\
&\text{2. Uniform Design:} && X_i \sim \text{Uniform}(-0.1,0.1) \qquad\text{and}
\\
&\text{3. Bernoulli Design:} && X_i \sim 0.1 \cdot \text{Bernoulli}(0.1).
\end{align*}
where $I_p$ denotes the $p \times p$ identity matrix.
In turn, generate the exponents by
$$
\alpha_i = \exp(X_i^\intercal \theta_0)
$$
and then generate $Y_i$ by
$$
Y_i = \Lambda^{-1}(U_i; \alpha_i), \qquad U_i \sim \text{Uniform}(0,1),
$$
where $\Lambda(\ \cdot\ ; \alpha)$ denotes the CDF of the Pareto distribution with the unit scale and exponent $\alpha$.

In each iteration, we draw a random sample $(Y_i,X_i^\intercal)^\intercal$ of size $n=$ 10,000.
Setting the cutoff $w_n$ to the 95-th empirical percentile of $\{Y_i\}_{i=1}^n$, we have the effective sample size of $n_0=$ 500 from five percent of $n$.
We vary the dimension $p \in \{250,500,1000\}$ of the parameter vector $\theta_0$ across sets of simulations.
While there are $p$ coordinates in $\theta_0$, we focus on the first coordinate $\theta_{01} = 1.0$ for evaluating our method of estimation and inference.
Throughout, we use $K = 5$ for the number of subsamples in sample splitting.
The other tuning parameters are set according to Assumption \ref{assum:rate_tuning} where $c=1$, $c'=1$ and $c''=100$.
We run 10,000 Monte Carlo iterations for each design.

Table \ref{tab:simulation_results} summarizes the simulation results.
\begin{table}[t]
\centering\renewcommand{\arraystretch}{0.8}
\begin{tabular}{cccccccccc}
\\\\\hline\hline
$n_0$ & $p$ & $\theta$ & $X$ & $\Lambda$ && Bias & SD & RMSE & 95\% \\
\hline
500 & 250 & Sparse      & Gaussian & Pareto && 0.023 & 0.545 & 0.545 & 0.918\\
500 & 250 & Exponential & Gaussian & Pareto && 0.012 & 0.494 & 0.494 & 0.936\\
500 & 500 & Sparse      & Gaussian & Pareto && 0.022 & 0.542 & 0.542 & 0.919\\
500 & 500 & Exponential & Gaussian & Pareto && 0.007 & 0.487 & 0.487 & 0.942\\
500 &1000 & Sparse      & Gaussian & Pareto &&0.033 & 0.520 & 0.521 & 0.927\\
500 &1000 & Exponential & Gaussian & Pareto &&0.000 & 0.483 & 0.483 & 0.938\\
500 & 250 & Sparse      & Uniform  & Pareto && 0.000 & 0.824 & 0.824 & 0.941\\
500 & 250 & Exponential & Uniform  & Pareto &&-0.018 & 0.799 & 0.799 & 0.949\\
500 & 500 & Sparse      & Uniform  & Pareto &&-0.004 & 0.825 & 0.825 & 0.941\\
500 & 500 & Exponential & Uniform  & Pareto &&-0.015 & 0.791 & 0.792 & 0.948\\
500 &1000 & Sparse      & Uniform  & Pareto &&-0.008 & 0.820 & 0.820 & 0.941\\
500 &1000 & Exponential & Uniform  & Pareto &&-0.020 & 0.800 & 0.800 & 0.944\\
500 & 250 & Sparse      & Bernoulli& Pareto &&-0.233 & 0.716 & 0.753 & 0.961\\
500 & 250 & Exponential & Bernoulli& Pareto &&-0.101 & 0.831 & 0.837 & 0.955\\
500 & 500 & Sparse      & Bernoulli& Pareto &&-0.252 & 0.710 & 0.753 & 0.964\\
500 & 500 & Exponential & Bernoulli& Pareto &&-0.112 & 0.823 & 0.830 & 0.958\\
500 &1000 & Sparse      & Bernoulli& Pareto &&-0.244 & 0.713 & 0.753 & 0.963\\
500 &1000 & Exponential & Bernoulli& Pareto &&-0.101 & 0.826 & 0.832 & 0.955\\
\hline\hline
\end{tabular}
\caption{\small Simulation results. The sets of results vary with the dimension $p$ of the parameter vector $\theta_0$, the design for the parameter vector $\theta_0$, and the design for the covariate vector $X$. For each row, displayed are the bias (Bias), standard deviations (SD), root mean square errors (RMSE), and the coverage frequencies by the 95\% confidence interval (95\%).}${}$
\label{tab:simulation_results}
\end{table}
The sets of results vary with the effective sample size $n_0$, the dimension $p$ of the parameter vector $\theta$, the design for the parameter vector $\theta$, and the design for the covariate vector $X$. 
For each row, displayed are the bias (Bias) of the debiased estimator $\widetilde\theta_1$, standard deviations (SD), root mean square errors (RMSE), and the coverage frequencies by the 95\% confidence interval (95\%).

For each set, the bias is much smaller than the standard deviations and hence the 95\% confidence interval delivers accurate coverage frequencies.
We ran many other sets of simulations with different values of $n_0$ and $p$ as well as parameter designs and data-generating designs, and confirmed that the simulation results turned out to be similar in qualitative patterns to those presented here.
The additional results are omitted from the paper to avoid repetitive exposition.

\section{Application: Text Analysis of Viral Posts on LGBTQ+}\label{sec:application}

In this section, we apply our proposed method to analyze LGBTQ$+$-related posts on X (formerly Twitter). Our goal is to infer the impact of specific words on attracting ``likes'' for these posts. The dataset comprises tweets containing the keyword ``LGBT,'' scraped from Twitter between August 21 and August 26, 2022.\footnote{The data set is publicly available at \texttt{https://www.kaggle.com/datasets/vencerlanz09/lgbt-tweets}.}

Each observation in our study represents a single post. Our sample includes a total of $n = 32,456$ posts. The data records the number of likes, $Y_i$, that the $i$-th post has received. As we will demonstrate below, $Y_i$ follows a heavy-tailed distribution: most posts attract a small number of likes, while a few viral posts garner a large number of likes.
We construct a word bank consisting of 936,556 unique words used across the $n = 32,456$ posts in our sample. The $j$-th coordinate, $X_{ij}$, of the covariate vector $X_i$ takes a value of 1 if the $j$-th word in the word bank is used in the $i$-th post and 0 otherwise.
From these 936,556 unique words, we only include the 500 most frequently used words to create the binary indicators in $X_i$. Therefore, the dimension $p$ of $X_i$ is 500. This list explicitly excludes articles, auxiliary verbs, and prepositions.

Figure \ref{fig:intro:loglog} in the introduction presents the log-log plot of the empirical distribution $\{Y_i\}_{i=1}^n$ for posts with a positive number of likes. We focus on posts with positive likes because the logarithm of zero is undefined. The horizontal axis represents the rank of $Y$, while the vertical axis represents $\log(Y)$. The approximate linearity of this log-log plot suggests that the distribution of $Y$ follows a power law, indicating that $Y$ is characterized by a Pareto distribution.

Table \ref{tab:top30} displays the 30 most frequently used words in LGBTQ$+$ posts. For each word, the total number of times it appeared (Count) and the total number of posts in which it appeared (Tweets) are shown. The last value corresponds to $\sum_{i=1}^n X_{ij}$ for $j=1,\dots,30$. All characters have been converted to lowercase to ensure the counting is not case-sensitive.
Notice that the most frequent word, ``lgbt,'' is distinct from the eleventh most frequent word, ``\#lgbt.'' The former is a plain word, while the latter functions as a hashtag, serving to link posts with others containing the same hashtag.

\begin{table}[tb]\renewcommand{\arraystretch}{0.825}
\scalebox{0.94}{
\begin{tabular}{|rlrrc|rlrrc|rlrr|}
\hline
& \small Word & \small Count & \small Tweets && & \small Word & \small Count & \small Tweets && & \small Word & \small Count & \small Tweets\\ 
\hline
1 & lgbt 	& 23734 & 8133 && 	11 & \#lgbt 		& 4989 & 1280 && 21 & 	it's	 & 3194 & 	1842\\
2 & and 	& 18706 & 12357 && 	12 & this 		& 4956 & 3532 && 22 & 	lgbt+	 & 3165 & 	578\\
3 & i 		& 10162 & 1559 && 	13 & community 	& 4189 & 3640 && 23 & 	their	 & 3127 & 	2641\\
4 & that 	&9022 & 6931 && 		14 & have 		& 4186 & 3605 && 24 & 	my	 & 2747 & 	2003\\
5 & you 	& 8965 & 5374 && 		15 & just 		& 3563 & 2900 && 25 & 	don't	 & 2690 & 	2195\\
6 & people 	& 7117 & 5495 && 		16 & or 		& 3555 & 2903 && 26 & 	what	 & 2662 & 	1840\\
7 & it 		& 7015 & 5025 && 		17 & so 		& 3517 & 2534 && 27 & 	he	 & 2631 & 	1548\\
8 & not 	& 5987 & 4650 && 		18 & if 		& 3346 & 2207 && 28 & 	your	 & 2608 & 	2068\\
9 & they 	& 5695 & 3705 && 		19 & all 		& 3231 & 2624 && 29 & 	gay	 & 2559 & 	1833\\
10 &  but 	& 5012 & 3770 && 		20 & who 		& 3227 & 2723 && 30 & 	do	 & 2559 & 	2072\\
\hline 
\end{tabular}
}
\caption{\small The top 30 most frequent words used in LGBTQ$+$ posts. Displayed next to each word are the total number of times it appeared (Count) and the total number of posts in which it appeared (Tweets). All the characters are unified to lower-case letters so that the counting is not case sensitive.}${}$
\label{tab:top30}
\end{table}

We apply our proposed method of estimation and inference to analyze the effects of using these and other words on the tail shape of the distribution of the number of likes. Consistent with our simulation studies, we set $w_n$ to the 95th percentile of the empirical distribution of $\{Y_i\}_{i=1}^n$, which results in an effective sample size of $n_0 = 1,623$. The rules for selecting the tuning parameters remain the same as those used in our simulation studies.

Table \ref{tab:frequent30_results} presents the estimates, standard errors, 95\% confidence intervals, and t-statistics for $\theta_j$ for the 30 most frequently used words, listed in the same order as in Table \ref{tab:top30}. Notably, the most frequent word, ``lgbt,'' has a significantly negative coefficient, while the eleventh most frequent word, ``\#lgbt,'' has a significantly positive coefficient.
Recall that smaller values of the Pareto exponent correspond to more extreme values of $Y_i$. Therefore, this finding suggests that using the plain word ``lgbt'' tends to attract a substantially larger number of likes, whereas using the hashtag ``\#lgbt'' may have the opposite effect.
Most of the other words in Table \ref{tab:frequent30_results} are statistically insignificant, with the exceptions of ``they'' and ``it's,'' whose positive coefficients indicate their adverse effects.

\begin{table}
\centering
\scalebox{0.89}{\renewcommand{\arraystretch}{0.825}
\begin{tabular}{|rlccrrcc|rlccrrc|}
\hline
$j$ & Word & $\widetilde\theta_j$ & SE & \multicolumn{2}{c}{95\% CI} & t && $j$ & Word & $\widetilde\theta_j$ & SE & \multicolumn{2}{c}{95\% CI} & t\\
\hline
1	& lgbt		& -0.14	& 0.06	& [-0.26	& -0.03]	& -2.40	&& 16	 & or	 	& 0.00 & 	0.09 & 	[-0.17 & 	0.18] & 	0.04\\
2	& and		& -0.01	& 0.05	& [-0.11	& 0.10]	& -0.16	&& 17	 & so	 	& 0.10 & 	0.10 & 	[-0.09 & 	0.29] & 	1.02\\
3	& i		& 0.07	& 0.12	& [-0.17	& 0.31]	& 0.58	&& 18	 & if	 	& -0.08 & 	0.11 & 	[-0.30 & 	0.14] & 	-0.73\\
4	& that		& 0.04	& 0.06	& [-0.08	& 0.16]	& 0.70	&& 19	 & all	 	& 0.17 & 	0.09 & 	[0.00 & 	0.35] & 	1.96\\
5	& you		& 0.10	& 0.07	& [-0.03	& 0.24]	& 1.49	&& 20	 & who	 & 0.10 & 	0.08 & 	[-0.05 & 	0.25] & 	1.26\\
6	& people	& 0.07	& 0.07	& [-0.06	& 0.20]	& 1.04	&& 21	 & it's	 	& 0.22 & 	0.10 & 	[0.02 & 	0.42] & 	2.14\\
7	& it		& 0.06	& 0.07	& [-0.08	& 0.21]	& 0.89	&& 22	 & lgbt+	 & 0.03 & 	0.17 & 	[-0.31 & 	0.37] & 	0.18\\
8	& not		& 0.01	& 0.08	& [-0.14	& 0.16]	& 0.10	&& 23	 & their	 & 0.06 & 	0.09 & 	[-0.12 & 	0.24] & 	0.68\\
9	& they	& 0.16	& 0.07	& [0.02	& 0.30]	& 2.26	&& 24	 & my	 	& 0.17 & 	0.10 & 	[-0.02 & 	0.37] & 	1.75\\
10	& but		& 0.02	& 0.08	& [-0.14	& 0.18]	& 0.25	&& 25	 & don't	 & 0.14 & 	0.10 & 	[-0.05 & 	0.34] & 	1.43\\
11	& \#lgbt	& 0.41	& 0.15	& [0.11	& 0.71]	& 2.69	&& 26	 & what	 & 0.09 & 	0.11 & 	[-0.13 & 	0.31] & 	0.81\\ 
12	& this		& 0.13	& 0.07	& [-0.01	& 0.28]	& 1.82	&& 27	 & he	 	& 0.04 & 	0.09 & 	[-0.15 & 	0.22] & 	0.41\\
13	& community	& -0.03	& 0.08	& [-0.19	& 0.13]	& -0.38	&& 28	 & your	 & 0.07 & 	0.10 & 	[-0.13 & 	0.27] & 	0.71\\
14	& have	& 0.12	& 0.08	& [-0.03	& 0.27]	& 1.60	&& 29	 & gay	 	& 0.08 & 	0.10 & 	[-0.11 & 	0.27] & 	0.82\\
15	& just		& -0.04	& 0.09	& [-0.22	& 0.15]	& -0.41	&& 30	 & do	 	& 0.11 & 	0.10 & 	[-0.09 & 	0.32] & 	1.09\\
\hline
\end{tabular}
}
\caption{\small Estimates, standard errors, 95\% confidence intervals, and the t statistics for $\theta_j$ for the top 30 most frequent words. These words are listed in the same order as in Table \ref{tab:top30}.}${}$
\label{tab:frequent30_results}
\end{table}

We then identify the 10 most effective words and the 10 least effective words from the list of $p = 500$ words. Table \ref{tab:strongest_results} presents the estimates, standard errors, 95\% confidence intervals, and t-statistics for $\theta_j$ for these words. The words are sorted in descending order based on the absolute value of the estimate $\widetilde\theta_j$.
Again, we find that the plain word ``lgbt'' is the only significantly effective word. In contrast, hashtags containing this effective keyword, such as ``\#lgbtqia'' and ``\#lgbtq,'' tend to have negative contributions to attracting likes.

\begin{table}
\centering\renewcommand{\arraystretch}{0.825}
\scalebox{0.97}{
\begin{tabular}{|lccrrcc|lccrrc|}
\hline
\multicolumn{6}{|c}{10 Most Effective Words} && \multicolumn{6}{c|}{10 Least Effective Words}\\
\hline
Word & $\widetilde\theta_j$ & SE & \multicolumn{2}{c}{95\% CI} & t && Word & $\widetilde\theta_j$ & SE & \multicolumn{2}{c}{95\% CI} & t\\
\hline
lgbt		& -0.14	& 0.06	& [-0.26	& -0.03]		& -2.40 && lgb			& 4.04	& 0.80	& [2.48	& 5.61]	& 5.06\\
if			& -0.08	& 0.11	& [-0.30	& 0.14]		& -0.73 && ukraine		& 3.68	& 0.67	& [2.36	& 5.00]	& 5.46\\
me			& -0.07	& 0.11	& [-0.28	& 0.14]		& -0.67 && 377a			& 3.30	& 0.70	& [1.92	& 4.68]	& 4.69\\
make		& -0.07	& 0.14	& [-0.33	& 0.20]		& -0.49 && \#lgbtqia	& 3.01	& 0.69	& [1.67	& 4.36]	& 4.39\\
just		& -0.04	& 0.09	& [-0.22	& 0.15]		& -0.41 && \#pride		& 2.74	& 0.64	& [1.48	& 3.99]	& 4.27\\
community	& -0.03	& 0.08	& [-0.19	& 0.13]		& -0.38 && let's		& 2.62	& 0.73	& [1.18	& 4.06]	& 3.57\\
and			& -0.01	& 0.05	& [-0.11	& 0.10]		& -0.16 && \#lgbtq		& 2.60	& 0.54	& [1.53	& 3.66]	& 4.79\\
also		& 0.00		& 0.14	& [-0.28	& 0.27]		& -0.01 && american		& 2.42	& 0.54	& [1.36	& 3.49]	& 4.46\\
has			& 0.00		& 0.10	& [-0.19	& 0.19]		& -0.01 && magic		& 2.41	& 0.55	& [1.34	& 3.49]	& 4.41\\
or			& 0.00		& 0.09	& [-0.17	& 0.18]		& 0.04  && x				& 2.33	& 0.45	& [1.44	& 3.21]	& 5.16\\
\hline
\end{tabular}
}
\caption{\small Estimates, standard errors, 95\% confidence intervals, and the t statistics for $\theta_j$ for the top 30 most effective words to attract likes. These words are sorted in descending order in terms of the estimate $\widetilde\theta_j$.}${}$
\label{tab:strongest_results}
\end{table}

\section{Summary}

This paper introduces a novel high-dimensional tail index regression (HDTIR) model inspired by observing power-law distributions in social media posts, particularly in the distribution of ``likes'' on viral content. We tackle the challenges of estimating and inferring the parameters of the tail index model when the dimension of the explanatory variables increases and may exceed the sample size.

We begin by developing a regularized estimation method for the HDTIR model, demonstrating its consistency and establishing its convergence rate. To facilitate inference, we introduce a debiasing technique that corrects the bias introduced by regularization. This allows us to derive the asymptotic normality of the debiased estimator, providing a robust framework for statistical inference in high-dimensional settings.

Extensive simulation studies validate the theoretical properties of our model, showing strong performance even in finite samples. In addition, we apply the HDTIR method to a dataset of viral posts on X (formerly Twitter) related to LGBTQ$+$ topics. This empirical analysis reveals insights into how specific words influence the likelihood of a post going viral, with terms like `lgbt' playing a significant role while hashtags like `\#lgbtq' do not. The results demonstrate the practical utility of the HDTIR model in understanding and predicting the factors that drive the popularity of online content.

Extensions are also provided in the appendix.
While our proposed method is based on the maximum likelihood principle, an alternative approach based on least squares is also possible. We develop this alternative methodology in Appendix~\ref{app:ols}.
Our framework can be further extended to accommodate large-scale online data, which is particularly relevant in settings such as social media applications where data are generated continuously and at scale. Appendix~\ref{app:online} presents this extension, where we employ a variant of stochastic gradient descent to efficiently process streaming data, thereby ensuring that the proposed methods remain scalable and practical for real-world applications.

\bibliography{bib}

@article{d2018extremal,
  title={Extremal quantile regressions for selection models and the black--white wage gap},
  author={D’Haultf{\oe}uille, Xavier and Maurel, Arnaud and Zhang, Yichong},
  journal={Journal of Econometrics},
  volume={203},
  number={1},
  pages={129--142},
  year={2018},
  publisher={Elsevier}
}

@article{deuber2024estimation,
  title={Estimation and inference of extremal quantile treatment effects for heavy-tailed distributions},
  author={Deuber, David and Li, Jinzhou and Engelke, Sebastian and Maathuis, Marloes H},
  journal={Journal of the American Statistical Association},
  volume={119},
  number={547},
  pages={2206--2216},
  year={2024},
  publisher={Taylor \& Francis}
}

@article{zhang2018extremal,
  title={Extremal quantile treatment effects},
  author={Zhang, Yichong},
  journal={Annals of Statistics},
  volumne={46},
  number={4B},
  pages={3707--3740},
  year={2018}
}

@article{caner2018asymptotically,
  title={Asymptotically honest confidence regions for high dimensional parameters by the desparsified conservative lasso},
  author={Caner, Mehmet and Kock, Anders Bredahl},
  journal={Journal of Econometrics},
  volume={203},
  number={1},
  pages={143--168},
  year={2018},
  publisher={Elsevier}
}

@article{meinshausen2006high,
  title={High-dimensional graphs and variable selection with the lasso},
  author={Meinshausen, Nicolai and B{\"u}hlmann, Peter},
  journal={Annals of Statistics},
  volumne={34},
  number={3},
  pages={1436--1462},
  year={2006}
}

@article{chernozhukov2017extremal,
  title={Extremal quantile regression},
  author={Chernozhukov, Victor and Fern{\'a}ndez-Val, Iv{\'a}n and Kaji, Tetsuya},
  journal={Handbook of Quantile Regression},
  pages={333--362},
  year={2017},
  publisher={Chapman and Hall/CRC}
}

@article{taddy2013multinomial,
  title={Multinomial inverse regression for text analysis},
  author={Taddy, Matt},
  journal={Journal of the American Statistical Association},
  volume={108},
  number={503},
  pages={755--770},
  year={2013},
  publisher={Taylor \& Francis}
}

@article{belloni2018uniformly,
  title={Uniformly valid post-regularization confidence regions for many functional parameters in z-estimation framework},
  author={Belloni, Alexandre and Chernozhukov, Victor and Chetverikov, Denis and Wei, Ying},
  journal={Annals of Statistics},
  volume={46},
  number={6B},
  pages={3643},
  year={2018},
  publisher={NIH Public Access}
}

@article{wang2009tail,
  title={Tail index regression},
  author={Wang, Hansheng and Tsai, Chih-Ling},
  journal={Journal of the American Statistical Association},
  volume={104},
  number={487},
  pages={1233--1240},
  year={2009},
  publisher={Taylor \& Francis}
}

@Article{nicolau2023tail,
  author    = {Nicolau, Jo{\~a}o and Rodrigues, Paulo MM and Stoykov, Marian Z},
  title     = {Tail index estimation in the presence of covariates: Stock returns’ tail risk dynamics},
  journal   = {Journal of Econometrics},
  year      = {2023},
  volume    = {235},
  number    = {2},
  pages     = {2266-2284},
  publisher = {Elsevier},
}

@Book{deHaan07,
  title     = {Extreme Value Theory: An Introduction},
  publisher = {Springer Science and Business Media, New York},
  year      = {2007},
  author    = {de Haan, L. and A. Ferreira},
}

@Book{Resnick07,
  title     = {Heavy-tail phenomena: probabilistic and statistical modeling},
  publisher = {Springer Science \& Business Media},
  year      = {2007},
  author    = {Resnick, S.},
}

@Article{WangLi2013,
  author  = {Wang, Huixia Judy and Li, Deyuan},
  title   = {Estimation of Extreme Conditional Quantiles Through Power Transformation},
  journal = {Journal of the American Statistical Association},
  year    = {2013},
  volume  = {108},
  number  = {503},
  pages   = {1062-1074},
}

@Article{LiLengYou2020,
  author  = {Li, Rui and Leng, Chenlei and You, Jinhong},
  title   = {Semiparametric Tail Index Regression},
  journal = {Journal of Business \& Economic Statistics},
  year    = {2020},
  volume  = {40},
  number  = {1},
  pages   = {82-95},
}

@Article{GardesGirard2010,
  author  = {Gardes, L. and Girard, S.},
  title   = {Conditional extremes from heavy-tailed distributions: An application to the estimation of extreme rain fall return levels},
  journal = {Extremes},
  year    = {2010},
  volume  = {13},
  pages   = {177-204},
}

@Article{GardesArmelleAntoine2012,
  author  = {Gardes, Laurent and Guillou, Armelle and Schorgen, Antoine},
  title   = {Estimating the conditional tail index by integrating a kernel conditional quantile estimator},
  journal = {Journal of Statistical Planning and Inference},
  year    = {2012},
  volume  = {142},
  number  = {6},
  pages   = {1586-1598},
}

@Article{DaouiaGardesGirardLekina2010,
  author  = {Daouia, Abdelaati and Gardes, Laurent and Girard, St{\'e}phane and Lekina, Alexandre},
  title   = {Kernel estimators of extreme level curves},
  journal = {Test},
  year    = {2010},
  volume  = {20},
  pages   = {311-333},
}

@Article{DaouiaGardesGirard2013,
  author  = {Daouia, Abdelaati and Gardes, Laurent and Girard, St{\'e}phane},
  title   = {On kernel smoothing for extremal quantile regression},
  journal = {Bernoulli},
  year    = {2013},
  volume  = {19},
  number  = {5B},
  pages   = {2557-2589},
}

@Article{HuangZhang2012,
  author    = {Huang, Jian and Zhang, Cun-Hui},
  title     = {Estimation and selection via absolute penalized convex minimization and its multistage adaptive applications},
  journal   = {Journal of Machine Learning Research},
  year      = {2012},
  volume    = {13},
  number    = {1},
  pages     = {1839--1864},
  publisher = {JMLR. org},
}

@Article{drees1998smooth,
  author  = {Drees, H.},
  title   = {On smooth statistical tail functionals},
  journal = {Scandinavian Journal of Statistics},
  year    = {1998},
  volume  = {25},
  number  = {1},
  pages   = {187--210},
}

@Article{drees1998estimate,
  author  = {Drees, H.},
  title   = {A general class of estimators of the extreme value index},
  journal = {Journal of Statistical Planning and Inference},
  year    = {1998},
  volume  = {66},
  pages   = {95-112},
}

@Article{van2008high,
  author    = {van de Geer, Sara A},
  title     = {High-dimensional generalized linear models and the lasso},
  journal   = {Annals of Statistics},
  year      = {2008},
  volume    = {36},
  number    = {2},
  pages     = {614},
  publisher = {Institute of Mathematical Statistics},
}

@Article{javanmard2014confidence,
  author    = {Javanmard, Adel and Montanari, Andrea},
  title     = {Confidence intervals and hypothesis testing for high-dimensional regression},
  journal   = {Journal of Machine Learning Research},
  year      = {2014},
  volume    = {15},
  number    = {1},
  pages     = {2869--2909},
  publisher = {JMLR. org},
}

@article{cai2023statistical,
  title={Statistical inference for high-dimensional generalized linear models with binary outcomes},
  author={Cai, T Tony and Guo, Zijian and Ma, Rong},
  journal={Journal of the American Statistical Association},
  volume={118},
  number={542},
  pages={1319--1332},
  year={2023},
  publisher={Taylor \& Francis}
}

@Article{van2014asymptotically,
  author    = {Sara van de Geer and Peter B{\"u}hlmann and Ya’acov Ritov and Ruben Dezeure},
  title     = {{On asymptotically optimal confidence regions and tests for high-dimensional models}},
  journal   = {Annals of Statistics},
  year      = {2014},
  volume    = {42},
  number    = {3},
  pages     = {1166 -- 1202},
  doi       = {10.1214/14-AOS1221},
  keywords  = {central limit theorem, generalized linear model, Lasso, linear model, multiple testing, Semiparametric efficiency, Sparsity},
  publisher = {Institute of Mathematical Statistics},
  url       = {https://doi.org/10.1214/14-AOS1221},
}

@article{zhang2014confidence,
  title={Confidence intervals for low dimensional parameters in high dimensional linear models},
  author={Zhang, Cun-Hui and Zhang, Stephanie S},
  journal={Journal of the Royal Statistical Society: Series B: Statistical Methodology},
  pages={217--242},
  year={2014},
  publisher={JSTOR}
}

@Article{chernozhukov2018double,
  author    = {Chernozhukov, Victor and Chetverikov, Denis and Demirer, Mert and Duflo, Esther and Hansen, Christian and Newey, Whitney and Robins, James},
  title     = {Double/debiased machine learning for treatment and structural parameters},
  journal   = {Econometrics Journal},
  year      = {2018},
  volume    = {21},
  number    = {1},
  pages     = {1-68},
  publisher = {Oxford University Press},
}

@article{negahban2009unified,
  title={A unified framework for high-dimensional analysis of $ m $-estimators with decomposable regularizers},
  author={Negahban, Sahand and Yu, Bin and Wainwright, Martin J and Ravikumar, Pradeep},
  journal={Advances in Neural Information Processing Systems},
  volume={22},
  year={2009}
}

@Article{wong2020lasso,
  author  = {Wong, Kam Chung and Li, Zifan and Tewari, Ambuj},
  title   = {Lasso guarantees for $\beta$-mixing heavy-tailed time series},
  journal = {Annals of Statistics},
  year    = {2020},
  volume  = {48},
  number  = {2},
  pages   = {1124-1142},
}

@Article{babii2023machine,
  author  = {Babii, Andrii and Ball, Ryan T and Ghysels, Eric and Striaukas, Jonas},
  title   = {Machine learning panel data regressions with heavy-tailed dependent data: Theory and application},
  journal = {Journal of Econometrics},
  year    = {2023},
  volume  = {237},
  number  = {2},
  pages   = {105315},
}

@Article{fan2021shrinkage,
  author  = {Fan, Jianqing and Wang, Weichen and Zhu, Ziwei},
  title   = {A shrinkage principle for heavy-tailed data: High-dimensional robust low-rank matrix recovery},
  journal = {Annals of Statistics},
  year    = {2021},
  volume  = {49},
  number  = {3},
  pages   = {1239},
}

@InProceedings{zhu2021taming,
  author       = {Zhu, Ziwei and Zhou, Wenjing},
  title        = {Taming heavy-tailed features by shrinkage},
  booktitle    = {International Conference on Artificial Intelligence and Statistics},
  year         = {2021},
  pages        = {3268--3276},
  organization = {PMLR},
}

@Article{agarwal2012stochastic,
  author  = {Agarwal, Alekh and Negahban, Sahand and Wainwright, Martin J},
  title   = {Stochastic optimization and sparse statistical recovery: Optimal algorithms for high dimensions},
  journal = {Advances in Neural Information Processing Systems},
  year    = {2012},
  volume  = {25},
}

@Article{chen2020statistical,
  author  = {Chen, Xi and Lee, Jason D and Tong, Xin T and Zhang, Yichen},
  title   = {Statistical inference for model parameters in stochastic gradient descent},
  journal = {Annals of Statistics},
  year    = {2020},
  volume  = {48},
  number  = {1},
  pages   = {251-273},
}

@Article{vershynin2010introduction,
  author  = {Vershynin, Roman},
  title   = {Introduction to the non-asymptotic analysis of random matrices},
  journal = {arXiv preprint arXiv:1011.3027},
  year    = {2010},
}

@Article{gabaix2009,
  author  = {Gabaix, Xavier},
  title   = {Power Laws in Economics and Finance},
  journal = {Annual Review of Economics},
  year    = {2009},
  volume  = {1},
  pages   = {255-293},
  doi     = {10.1146/annurev.economics.050708.142940},
}

@Article{gabaix2016JEP,
  author  = {Gabaix, Xavier},
  title   = {Power Laws in Economics: An Introduction},
  journal = {Journal of Economic Perspectives},
  year    = {2016},
  volume  = {30},
  number  = {1},
  pages   = {185-206},
  month   = {Winter},
  doi     = {10.1257/jep.30.1.185},
}

@Article{fagan2010introduction,
  author  = {Fagan, Stephen and Gen{\c{c}}ay, Ramazan},
  title   = {An introduction to textual econometrics},
  journal = {Handbook of Empirical Economics and Finance},
  year    = {2010},
  pages   = {133-153},
}

@article{nicolau2024simple,
  title={A simple but powerful tail index regression},
  author={Nicolau, Jo{\~a}o and Rodrigues, Paulo MM},
  journal={arXiv preprint arXiv:2409.13531},
  year={2024}
}

@article{Zhou2009RE,
  author  = {Zhou, Shuheng},
  title   = {Restricted Eigenvalue Conditions on Subgaussian Random Matrices},
  journal = {arXiv preprint arXiv:0912.4045},
  year    = {2009}
}
\newpage
\appendix
\section*{Appendix}
The appendix collects extensions to the baseline method, mathematical proofs, and technical details.
Appendix \ref{app:ols} presents the least-square-type estimator. 
Appendix \ref{app:online} considers the extension to online data. 
Appendix \ref{app:theorem} presents the proofs of the main theorems. 
Appendix \ref{app:lemma} presents auxiliary lemmas. 

\section{The OLS Approach to the HDTIR}\label{app:ols}
Our HDTIR estimator \eqref{eq:lasso} builds on the maximum likelihood estimation. 
Recently, \citet{nicolau2024simple} develop a simple tail index estimator based on linear regression. 
Inspired by this method, the current appendix section presents a least-squares approach to the HDTIR.

Assumption \ref{assum:shape} implies that the conditional distribution of $Y_i$ given $\{Y_i>w_n,X_i=x\}$ is the standard Pareto distribution with exponent $\alpha(x)$, and hence $\log(Y_i/w_n)|Y_i>w_n,X_i=x$ is a standard exponential random variable with parameter $\alpha(x)$. 
Define
\begin{equation}\label{eq:z}
    Z_i = - \log\left(\log \left( \frac{Y_i}{w_n} \right)\right) - \eta
\end{equation}
for $Y_i>w_n$, where $\eta \approx 0.57777$ denotes the Euler's constant. 
Note that the conditional mean of $Z_i$ given $X_i=x$ is simply $\log(\alpha(x))=x^\intercal\theta_0$ under the specification that $\alpha(x)=\exp(x^\intercal\theta_0)$. 

Therefore, $Z_i$ follows a shifted Type-I extreme value distribution conditional on $Y_i>w_n,X_i=x$ and satisfies
\begin{equation*}
    \mathbb{E}[Z_i - X_i'\theta_0 |Y_i>w_n] = 0 \text{\, and\, } Var[Z_i|X_i,Y_i>w_n] = \frac{\pi^2}{6}
\end{equation*}
\citep[cf.][Section 2.1.1]{nicolau2024simple}.
Accordingly, our regularized HDTIR-LS estimator can be given by
\begin{equation}\label{eq:ols}
    \hat{\theta}_{LS} = \arg\min_{\theta} \left\{ \sum_{i=1}^{n_0}(Z_i-X_i^{\intercal}\theta)^2 + \lambda_{2,n_0}||\theta||_1  \right\}.
\end{equation}
The Hessian matrix is asymptotically the same as in our previous MLE case \eqref{eq:lasso}.
The asymptotic variance can be estimated as $\hat{V}_{1j,LS} = \hat{V}_{1j} \times \pi^2/6$, where $\hat{V}_{1j}$ is defined in \eqref{eq:var}.

The debiasing approach based on the least squares estimator $\hat{\theta}_{LS}$ is simpler. 
Following the same notation as in Section \ref{sec:debias}, for each $k\in \{1,\dots,K\}$, we estimate $\widehat{\theta}_{k}$ via \eqref{eq:ols} by using the subsample of $\mathcal{D}^{c}_{k}$, and estimate $\hat{u}_{j,k}$ via \eqref{alg:cross_fitting} by using the subsample $\mathcal{D}_{k}$. 
Then, for each $j\in {1,\dots,p}$, let 
\begin{align}\label{eq:theta_tilde_LS}
    \tilde{\theta}_{j,k}:=\hat{\theta}_{j,k} + \frac{\hat{u}_{j,k}^{\intercal}}{n_{k}}\sum_{i\in \mathcal{D}_{k}}\left\{Z_i-X_i^{\intercal}\hat{\theta}_{k} \right\} X_{i}.
\end{align}


Under Assumption \ref{assum:shape}-\ref{assum:rate_tuning}, the least-squares estimator has theoretical properties closely analogous to those of the proposed MLE. Specifically, after the logarithmic transformation, the tail observations satisfy a linear conditional mean model with homoskedastic errors of finite variance, which places the OLS estimator within the standard high-dimensional linear regression framework. As a result, the regularized OLS estimator attains the same $\ell_1$ and $\ell_2$ convergence rates as the regularized MLE under comparable sparsity and restricted eigenvalue conditions. A debiased version also admits an asymptotically normal representation. 

The key difference lies in efficiency. Because the MLE exploits the full likelihood implied by the conditional Pareto tail, it achieves a smaller asymptotic variance, whereas the OLS estimator is generally less efficient but remains consistent and asymptotically normal. Consequently, the OLS approach provides a robust and computationally convenient alternative whose large-sample behavior mirrors that of the MLE, up to an efficiency loss.

\paragraph{Simulations.}
We conduct Monte Carlo simulations to compare the OLS estimator \eqref{eq:ols} with the MLE \eqref{eq:lasso}. 
The data-generating-process is the same as in Section \ref{sec:simulation}. 
Table \ref{tab:simulation_results_least_squareds} presents the results. 

We have two key findings.
First, the HDTIR-OLS estimator performs generally well across all specifications. 
It has a small bias and correct coverage. 
Second, compared with the MLE estimator \eqref{eq:lasso}, OLS has larger standard error and RMSE as expected. 
This is because the MLE is the most efficient estimator under the Pareto tail assumption, and hence achieves smaller asymptotic variance. 
This result is coherent with the simulations in the low-dimensional case \citep{nicolau2024simple}.

\begin{table}[tb]
\centering\renewcommand{\arraystretch}{0.82}
\begin{tabular}{cccccccccc}
\\\\\hline\hline
$n_0$ & $p$ & $\theta$ & $X$ & $\Lambda$ && Bias & SD & RMSE & 95\% \\
\hline
500 & 250 & Sparse      & Gaussian & Pareto &&-0.102 & 0.592 & 0.601 & 0.944\\
500 & 250 & Exponential & Gaussian & Pareto &&-0.030 & 0.595 & 0.596 & 0.947\\
500 & 500 & Sparse      & Gaussian & Pareto &&-0.103 & 0.592 & 0.601 & 0.942\\
500 & 500 & Exponential & Gaussian & Pareto &&-0.038 & 0.593 & 0.594 & 0.943\\
500 &1000 & Sparse      & Gaussian & Pareto &&-0.098 & 0.590 & 0.598 & 0.943\\
500 &1000 & Exponential & Gaussian & Pareto &&-0.029 & 0.594 & 0.595 & 0.946\\
500 & 250 & Sparse      & Uniform  & Pareto &&-0.030 & 1.006 & 1.007 & 0.951\\
500 & 250 & Exponential & Uniform  & Pareto &&-0.027 & 1.017 & 1.017 & 0.946\\
500 & 500 & Sparse      & Uniform  & Pareto &&-0.034 & 1.031 & 1.031 & 0.943\\
500 & 500 & Exponential & Uniform  & Pareto &&-0.026 & 1.014 & 1.014 & 0.947\\
500 &1000 & Sparse      & Uniform  & Pareto &&-0.048 & 1.031 & 1.032 & 0.946\\
500 &1000 & Exponential & Uniform  & Pareto &&-0.023 & 1.017 & 1.017 & 0.946\\
500 & 250 & Sparse      & Bernoulli& Pareto &&-0.006 & 1.167 & 1.166 & 0.948\\
500 & 250 & Exponential & Bernoulli& Pareto &&-0.011 & 1.188 & 1.188 & 0.942\\
500 & 500 & Sparse      & Bernoulli& Pareto &&-0.039 & 1.175 & 1.176 & 0.948\\
500 & 500 & Exponential & Bernoulli& Pareto &&-0.017 & 1.174 & 1.175 & 0.948\\
500 &1000 & Sparse      & Bernoulli& Pareto &&-0.022 & 1.180 & 1.180 & 0.947\\
500 &1000 & Exponential & Bernoulli& Pareto &&-0.006 & 1.180 & 1.180 & 0.946\\
\hline\hline
\end{tabular}
\caption{\small Simulation results for the least squares method. The sets of results vary with the dimension $p$ of the parameter vector $\theta_0$, the design for the parameter vector $\theta_0$, and the design for the covariate vector $X$. For each row, displayed are the bias (Bias), standard deviations (SD), root mean square errors (RMSE), and the coverage frequencies by the 95\% confidence interval (95\%).}${}$
\label{tab:simulation_results_least_squareds}
\end{table}

\section{Extension to Large-Scale Online Data}\label{app:online}

This section extends our proposed regularized HDTIR method to accommodate online streaming data, addressing the associated computational challenges posed by large-scale datasets. 
As online data are collected sequentially, we update our estimator at each data point using stochastic gradient descent (SGD). 
Although the X (formerly Twitter) dataset we use in our real data illustration is not large, this extension can be applied to other research that exploits larger social media datasets in future empirical research.
To highlight sequential data generation, we replace the subscript $i$ with $t$ in $\{Y_{t},X_{t}\}$, which remains i.i.d. across $t$.

We modify our baseline HDTIR as follows. 
First, we assume the threshold $w_{n}=\bar{w}$ is predetermined and fixed for the current online streaming data setting. 
This threshold can be obtained from the empirical quantile of $Y$ from a separate sample. 
Otherwise, allowing $\bar{w}$ to change with data collection makes asymptotic derivation intractable.
Second, for tail observations $Y_{t}>\bar{w}$, we effectively have a random sample $\{{Y_{t},X_{t}}\}$ from the distribution $F_{Y,X|Y>\bar{w}}$. 
As discussed in Section \ref{sec:main}, a sufficiently large $\bar{w}$ controls the asymptotic bias from deviation from Pareto.

Denote $T_{0}=\sum_{t=1}^{T} 1\left[ Y_{t}>\bar{w}\right] $ as the effective tail sample size. 
Focusing on the tail observations only, we rewrite the HDTIR problem as 
\begin{equation}
\hat{\theta}^{\text{\texttt{on}}}=\arg \min_{\theta }\frac{1}{T_{0}}\sum_{t=1}^{T_{0}}\left\{ \exp \left( X_{t}^{\intercal }\theta \right) \log \left( Y_{t}/\bar{w}\right) - X_{t}^{\intercal}\theta \right\} +\lambda_{T_{0}} \left\vert \left\vert \theta \right\vert \right\vert _{1},  \label{eq:radar1}
\end{equation}
where the notation $\hat{\theta}^{\text{\texttt{on}}}$ indicates an estimator based on online data. 
Specifically, we propose using the Regularization Annealed epoch Dual AveRaging (RADAR) algorithm \citep{agarwal2012stochastic}, a variant of the SGD. 
Like the SGD, the RADAR computes the stochastic gradient on one data point at each iteration and provides the optimal convergence rate in the $L^{1}$-norm.
We refer readers to \citet{agarwal2012stochastic} for details of the RADAR algorithm. 

In addition, we update the debiasing procedure with the RADAR as well. 
Given a fixed $\bar{w}$, denote the tail-conditional variance-covariance matrix 
\begin{equation*}
\Sigma _{\bar{w}}=\mathbb{E}\left[ X_{t}X_{t}^{\intercal }|Y_{t}>\bar{w}\right]
\end{equation*}%
and let $\Xi =\Sigma _{\bar{w}}^{-1}$. 
Given an estimate $\hat{\Xi}$ of $\Xi$, which will be discussed shortly, we propose the debiased estimator
\begin{equation*}
\tilde{\theta}^{\text{\texttt{on}}}=\hat{\theta}^{\text{\texttt{on}}} - \frac{1}{T_{0}}\hat{\Xi}X^{\intercal }Z\left( \hat{\theta}^{\text{\texttt{on}}}\right) ,
\end{equation*}%
where $X=[X_{1}^{\intercal },X_{2}^{\intercal },...,X_{T_{0}}^{\intercal }]^{\intercal }$ and $Z(\theta )=\{Z_{1}(\theta ),...,Z_{T}(\theta )\}^{\intercal }$ with
\begin{equation*}
Z_{t}(\theta )=\exp (X_{t}^{\intercal }\theta )\log (Y_{t}/\bar{w})-1.
\end{equation*}
Consequently, the $j$-th component of the debiased estimator reads as 
\begin{equation*}
\tilde{\theta}_{j}^{\text{\texttt{on}}}=\hat{\theta}_{j}^{\text{\texttt{on}}}-\frac{\hat{\Xi}_{j}^{\intercal }}{T_{0}}\sum_{t=1}^{T_{0}}\left\{ \exp \left( X_{t}^{\intercal }\hat{\theta}\right) \log \left( Y_{t}/\bar{w} \right) -1\right\} X_{t}.
\end{equation*}

\bigskip We construct $\hat{\Xi}$ by first running the nodewise Lasso\footnote{See \citet{meinshausen2006high}, \citet{van2014asymptotically}, \citet{zhang2014confidence} and \citet{caner2018asymptotically}, among others, for the nodewise regression approach.}
\begin{equation}\label{eq:radar2}
\hat{\gamma}^{j}=\underset{\gamma ^{j}\in \mathbb{R}^{p-1}}{\arg \min }\frac{1}{2T_{0}}\left\vert \left\vert X_{\cdot ,j}-X_{\cdot ,-j}\gamma ^{j}\right\vert \right\vert _{2}^{2}+\lambda_{j}\left\vert \left\vert \gamma ^{j}\right\vert \right\vert _{1},
\end{equation}
where $X_{\cdot ,j}$ is the $j$-th column of the matrix $X$, $X_{\cdot,-j}$ is the design submatrix without the $j$-th column, and $\lambda _{j}\asymp \sqrt{\log p/T_{0}}$. 
Now, construct
\begin{equation*}
\hat{\tau}_{j}=\frac{1}{T_{0}}\left( X_{\cdot ,j}-X_{\cdot ,-j}\hat{\gamma}^{j}\right) ^{\intercal }X_{\cdot ,j}.
\end{equation*}%
Given $\hat{\gamma}^{j}$ and $\hat{\tau}_{j}$, the matrix $\Xi $ can be estimated by 
\begin{equation}
\hat{\Xi}=\mathcal{\hat{T}}\times \mathcal{\hat{C}},  \label{eq:radarOmg}
\end{equation}%
where $\mathcal{\hat{T}}=$ \,diag$\left( 1/\hat{\tau}_{1},...,1/\hat{\tau}_{p}\right) $ and 
\begin{equation*}
\mathcal{\hat{C}}=\left( 
\begin{array}{cccc}
1 & -\hat{\gamma}_{2}^{1} & \cdots & -\hat{\gamma}_{p}^{1} \\ 
-\hat{\gamma}_{1}^{1} & 1 & \cdots & -\hat{\gamma}_{p}^{1} \\ 
\vdots & \vdots & \ddots & \vdots \\ 
-\hat{\gamma}_{1}^{1} & -\hat{\gamma}_{2}^{1} & \cdots & 1
\end{array}
\right).
\end{equation*}

The step-by-step algorithm to implement the above procedure is provided on the next page.
\begin{table}[tb]
\centering\renewcommand{\arraystretch}{0.75}
\scalebox{0.94}{
\begin{tabular}{llll}
\hline\hline
\multicolumn{4}{l}{\textbf{Algorithm }Stochastic optimization based estimation and confidence interval for HDTIR} \\ \hline
\textbf{Inputs:} &  &  &  \\ 
& \multicolumn{3}{l}{Regularization parameter $\lambda_{T_0}\asymp \sqrt{\log p/T_{0}}$, $\lambda _{j}\asymp \sqrt{\log p/T_{0}}$ for each dimension $j $,} \\ 
& \multicolumn{3}{l}{the noise level $\sigma $, confidence level $1-\alpha $, tail threshold $\bar{w}$} \\ \hline
\textbf{for} & \multicolumn{3}{l}{$t=1$ to $T$ \textbf{do}} \\ 
& \multicolumn{3}{l}{Randomly sample the data $(Y_{t},X_{t})$ and drop this data if $Y_{t}<\bar{w}$.} \\ 
& \multicolumn{3}{l}{Otherwise, update $X\leftarrow \left[ X^{\intercal},X_{t}\right] ^{\intercal }$ and $Y\leftarrow \left[ Y^{\intercal },X_{t}\right] ^{\intercal }.$} \\ 
& \multicolumn{3}{l}{Update $\hat{\theta}^{\texttt{on}}$ by running one iteration of RADAR on the optimization problem (\ref{eq:radar1})} \\ 
& \multicolumn{3}{l}{using the stochastic gradient $\left\{ \exp \left(X_{t}^{\intercal }\hat{\theta}_{t}\right) \log \left( Y_{t}/\bar{w}\right)-1\right\} X_{t}$} \\ 
& \textbf{for} & \multicolumn{2}{l}{$j=1$ to $p$ \textbf{do}} \\ 
&  & \multicolumn{2}{l}{Update $\gamma _{t}^{j}$ by running one iteration of RADAR on the optimization problem \eqref{eq:radar2}} \\ 
&  & \multicolumn{2}{l}{using the stochastic gradient $\left(X_{t,-j}^{\intercal }\gamma _{t-1}^{j}-X_{i,j}\right) X_{t,-j}$} \\ 
& \multicolumn{3}{l}{\textbf{end for}} \\ 
\multicolumn{2}{l}{\textbf{end for}} &  &  \\ \hline
\multicolumn{4}{l}{Let $\hat{\theta}^{\text{\texttt{on}}}$ and $\hat{\gamma}^{j}$ for $j\in \{1,...,p\}$ be the final outputs.} \\ 
\multicolumn{4}{l}{Construct the debiased estimator $\tilde{\theta}^{\text{\texttt{on}}}$ with $\hat{\Xi}$ defined in (\ref{eq:radarOmg})} \\ 
&  &  & $\tilde{\theta}^{\text{\texttt{on}}}=\hat{\theta}^{\text{\texttt{on}}}-\frac{1}{T_{0}}\hat{\Xi}X^{\intercal }Z\left( \hat{\theta}^{\text{\texttt{on}}}\right) $ \\ \hline
\multicolumn{4}{l}{\textbf{Outputs:}} \\ 
& \multicolumn{3}{l}{The estimator $\tilde{\theta}^{\text{\texttt{on}}}$ and the $(1-\alpha )$ confidence interval for each $\theta _{j}^{\ast }$: }\\
& \multicolumn{3}{l}{ $ \tilde{\theta}_{j}^{\text{\texttt{on}}}\pm z_{\alpha /2}\sqrt{( \hat{\Xi}\hat{\Sigma}\hat{\Xi}^{\intercal })_{j,j}/T_{0}}$, where $\hat{\Sigma}=\frac{1}{T_{0}}X^{\intercal }X$} \\ 
\hline\hline
\end{tabular}
}
\end{table}
To study the asymptotic properties, we modify our previous assumptions as follows.

\begin{assumption}[Online]\label{assum:online} (i) The sequence $\{(Y_t,X_t)\}_{t\ge 1}$ is i.i.d. and its distribution satisfies Assumption \ref{assum:shape},
(ii) for each $j\in\{1,\dots,p\}$, the marginal $X_{t,j}$ has compact support
$\mathcal X_j\subset\mathbb R$, and 
$\sup_{x\in\mathcal X_j} f_{X_{t,j}\mid Y_t>\bar w}(x)\ \le\ \bar f\ <\ \infty,
$
(iii) for some finite constants $C_{1}>0,C_{2}>1,C_{3}>0 $, the parameter space satisfies
\begin{equation*}
\Omega \left( s_{0}\right) =\left\{ 
\begin{array}{l}
\left( \theta ,\Sigma _{\bar{w}}\right) :\left\vert \left\vert \theta \right\vert \right\vert_{0} \leq s_{0},\left\vert \left\vert \theta \right\vert \right\vert _{1}\leq C_{1}, \\ 
1<\underline{\alpha }\leq \inf_{x}\exp \left( x^{\intercal }\theta \right) \leq \sup_{x}\exp \left( x^{\intercal }\theta \right) \leq \overline{\alpha }<\infty , \\ 
C_{2}^{-1}\leq \lambda _{\min }\left( \Sigma _{\bar{w}}\right) \leq \lambda_{\max }\left( \Sigma _{\bar{w}}\right) \leq C_{2}, \|\Sigma^{-1}_{\bar{w}}\|_{1}\leq C_{3}.
\end{array}
\right\}.
\end{equation*}
\end{assumption}

\begin{theorem}\label{Thm:online} 
Suppose that Assumption \ref{assum:online} holds, $\sqrt{\log p/T_{0}}=o(1)$, and $s_{0}(\log p)^{3/2}=o(\sqrt{T_{0}})$. 
Then, it holds that for each $j=1,\dots,p$,
\begin{equation*}
\frac{\sqrt{T_{0}}\left( \tilde{\theta}_{j}^{\text{\texttt{on}}}-\theta_{0j}\right) }{\sqrt{\left( \hat{\Xi}\hat{\Sigma}\hat{\Xi}^{\intercal }\right) _{j,j}}}\overset{d}{\rightarrow }\mathcal{N}\left( 0,1\right) .
\end{equation*}
\end{theorem}

\section{Proofs}\label{app:theorem}
\subsection{Proof of Theorem \textup{\ref{Thm:consistency}}}
\begin{proof}[Proof of Theorem \textup{\ref{Thm:consistency}}] Let $S=\left\{j: \theta_{0j}\neq 0\right\}$ with $|S|=s_0$. We work on the tail subsample $\{i: Y_i>w_n\}$ of size $n_0$, relabeled as $i=1,\dots,n_0$ as in the paper. 

Define three events:
\begin{align*}
\mathcal{E}_{1}=&\left\{ \|\hat{\theta}-\theta_{0}\|_{2}\lesssim\sqrt{\frac{s_{0}(\log p)}{n_0}}\right\}, 
\\
\mathcal{E}_{2}=&\left\{ \|\hat{\theta}-\theta_{0}\|_{1}\lesssim\sqrt{\frac{s_{0}^{2}(\log p)}{n_{0}}}\right\},  
\text{ and }
\\ 
\mathcal{E}_{3}=&\left\{ \frac{1}{n_{0}}\sum_{i=1}^{n_{0}}\left[X_{i}^{\intercal}\left(\hat{\theta}-\theta_{0}\right)\right]^{2}\lesssim\frac{s_{0}(\log p)}{n_{0}}\right\}.
\end{align*}

We first show $\mathbb{P}\left( \mathcal{E}_{1}\right) \geq 1-p^{-c}$ for some finite constant $c$. 
To this end, we resort to Proposition \ref{Prop:consistency} on properties of the Lasso. 
We have 
\begin{eqnarray*}
\dot{\ell}_{n_{0}}\left( \theta _{0}\right)=&\frac{1}{n_{0}}\sum_{i=1}^{n_{0}}\left\{ \exp \left( X_{i}^{\intercal}\theta _{0}\right) \log \left( Y_{i}/w_{n}\right) -1\right\} X_{i}.
\end{eqnarray*}
Define 
\begin{equation*}
Z_{i,j}=\left\{ \exp \left( X_{i}^{\intercal }\theta _{0}\right) \log \left( Y_{i}/w_{n}\right) -1\right\} X_{i,j}.
\end{equation*}%
Note that for each $j=1,\dots,p$, $\{Z_{i,j}\}_{i=1}^{n_{0}}$ are still i.i.d. conditional on  $\{Y_{i}>w_{n}\}_{i=1}^{n_{0}}$.

We now verify 
(i) $|| \dot{\ell}_{n_{0}}\left( \theta_{0}\right) ||_{\infty }\lesssim \sqrt{\left( \log p\right) /n_{0}}$ with probability at least $1-p^{-c}$; and 
(ii) for $F\left(\varsigma ,S;\psi ,\psi _{0}\right)$ defined in Proposition \ref{Prop:consistency}, $F\left(\varsigma ,S;\psi ,\psi _{0}\right) \gtrsim s_{0}^{-1/2}$ with probability at least $1-p^{-c}.$

For (i), by Lemma \ref{lem1}, $Z_{i,j}-\mathbb{E}\left[ Z_{i,j}|Y_i>w_n\right] $ is sub-exponential, where by Lemma \ref{lem3}(ii), $\mathbb{E}\left[ Z_{i,j}|Y_i>w_n\right]=0$ under Assumptions \ref{assum:shape}-\ref{assum:support}. Therefore, a sub-exponential Bernstein inequality \citep[][Remark 5.18]{vershynin2010introduction} and the union bound over $j=1,\dots,p$ imply that for some $C,c>0$, 
\[
\mathbb{P}\left(\Vert \dot{\ell}_{n_{0}}\left( \theta_{0}\right) \Vert_{\infty}\geq C \sqrt{\frac{\log p}{n_0}} \right)\leq p^{-c}.
\]
Set $\lambda_{n_0}\asymp\sqrt{(\log p)/n_0}$ so that $z^*:=\Vert \dot{\ell}_{n_{0}}\left( \theta_{0}\right) \Vert_{\infty}\leq \lambda_{n_0}$ with probability $\geq 1-p^{-c}.$ 

We now verify (ii).
Because
\begin{equation*}
\ddot{\ell}_{n_{0}}\left( \theta _{0}\right) =\frac{1}{n_{0}}\sum_{i=1}^{n_{0}}\exp \left( X_{i}^{\intercal }\theta _{0}\right) \log \left(Y_{i}/w_{n}\right) X_{i}X_{i}^{\intercal },
\end{equation*}
for any constant $b$, by the mean value form of Taylor's theorem,
\begin{eqnarray*}
b^{\intercal}\{\dot{\ell}_{n_{0}}\left( \theta_{0}+b\right) -\dot{\ell}_{n_{0}}\left(\theta _{0}\right)\} &=&\int_{0}^{1} b^{\intercal}\ddot{\ell}_{n_{0}}\left( \theta _{0}+tb\right)dt \\
&=&\frac{1}{n_{0}}\sum_{i=1}^{n_{0}}\log \left(Y_{i}/w_{n}\right)\left(\int_{0}^{1} \exp \left( X_{i}^{\intercal }\left( \theta_{0}+tb\right) \right)dt\right) \left( b^{\intercal }X_{i}\right) ^{2}.
\end{eqnarray*}
For $\alpha(X_i)=\exp(X^{\intercal}_i\theta_0)$, 
\[
\int^{1}_{0}\exp\left(X_i^{\intercal}\left(\theta_0+tb\right)\right)dt=\alpha(X_i)\int^{1}_{0}\exp(tX^{\intercal}_ib)dt=\alpha(X_i)\frac{\exp\{X^{\intercal}_ib\}-1}{X^{\intercal}_ib}.
\]
Then use the elementary inequality $e^{u}-1\geq u$ for all $u\in \mathbb R$, it implies that $\frac{e^{X^{\intercal}_ib}-1}{X^{\intercal}_ib}\geq 1$ and 
\begin{align}
    b^{\intercal}\{\dot{\ell}_{n_{0}}\left( \theta _{0}+b\right) -\dot{\ell}_{n_{0}}\left(\theta _{0}\right)\}&=\frac{1}{n_0}\sum^{n_0}_{i=1}\log (Y_i/w_n)\alpha(X_i)\frac{e^{X^{\intercal}_ib}-1}{X^{\intercal}_{i}b}(b^{\intercal}X_i)^{2}\nonumber\\
    &\geq \frac{1}{n_0}\sum^{n_0}_{i=1}\log(Y_i/w_n)\alpha(X_i)(b^{\intercal}X_i)^{2}\nonumber\\
    &\geq \underline{\alpha}\cdot \frac{1}{n_0}\sum^{n_0}_{i=1}\log (Y_i/w_n)(b^{\intercal}X_i)^2,\label{eq: bX}
\end{align}
where the last inequality follows by $\alpha \left( X_{i}\right) >\underline{\alpha }>0$ uniformly by Assumption \ref{assum:shape}.

Let $\psi \left( b\right) =\psi _{0}\left( b\right) =\left\vert \left\vert b\right\vert \right\vert _{2}$. 
For $F\left( \varsigma ,S;\|\cdot\|_{2} ,\|\cdot\|_{2}\right) $ defined in Proposition 1 and $S=\{j:\theta_{0j}\neq0\}$, for any $b$, because 
\[\|b_{S}\|_{1}\leq \sqrt{s_0}\|b_{S}\|_{2}\leq \sqrt{s_0}\|b\|_{2},\]
we have
\[\frac{1}{\|b_{S}\|_{1}\|b\|_{2}}\geq \frac{1}{\sqrt{s_0}\|b\|^{2}_{2}}.\]

On the set $\|b\|_{2}\leq 1$, we also have $e^{-\|b\|_{2}^{2}-\|b\|_{2}}\geq e^{-2}$, then by the definition of $F$ and \eqref{eq: bX}, we obtain 
\begin{align}\label{eq:F}
    F\left(\varsigma, S;\|\cdot\|_{2}, \|\cdot\|_{2}\right)\geq \frac{\underline{\alpha} e^{-2}}{\sqrt{s_0}}\inf_{b\in \mathcal{C}\left( \varsigma ,S\right),\left\vert \left\vert b\right\vert \right\vert _{2}=1}\frac{1}{n_{0}}\sum_{i=1}^{n_{0}}\log \left( Y_{i}/w_{n}\right) \left( b^{\intercal}X_{i}\right) ^{2}.
\end{align}
So it remains to find the lower bound of the empirical quadratic form uniformly over the cone. 

Define the truncation function $\varphi _{L}\left( \cdot \right) $ for some constant $L>1$ such that for any $x>0$
\begin{equation*}
\varphi _{L}\left[ x\right] =\left\{ 
\begin{array}{ll}
x & \text{if }x\leq L \\ 
2L-x & \text{if }x>L.
\end{array}\right.
\end{equation*}
Since all terms are nonnegative on $\{Y>w_n\}$, 
\[
\log(Y_i/w_n)(b^{\intercal}X_i)^{2}\geq \phi_{L}(\log(Y_i/w_n))\phi_{L}((b^{\intercal}X_i)^{2}).
\]
Hence, the infimum in \eqref{eq:F} is bounded below by 
\[
\inf_{b\in \mathcal{C}\left( \varsigma ,S\right),\left\vert \left\vert b\right\vert \right\vert _{2}=1}\frac{1}{n_{0}}\sum_{i=1}^{n_{0}}\phi_{L}(\log(Y_i/w_n))\phi_{L}((b^{\intercal}X_i)^{2}).
\]
Our Lemma \ref{lem4} shows that, for some constants $c, L>0$,
\begin{equation*}
\mathbb{P}\left( \left. \inf_{b\in \mathcal{C}\left( \varsigma ,S\right) ,\left\vert \left\vert b\right\vert \right\vert _{2}=1}\frac{1}{n_{0}} \sum_{i=1}^{n_{0}}\varphi _{L}\left[ \log \left( Y_{i}/w_{n}\right) \right] \varphi _{L}\left[ \left( b^{\intercal }X_{i}\right) ^{2}\right]>c\right\vert \{Y_{i}>w_{n}\}_{i=1}^{n_{0}}\right) \geq 1-e^{-cn_{0}-\log p}.
\end{equation*}
Then on this high-probability event, we have 
\[
F(\varsigma, S; \|\cdot\|_{2},\|\cdot\|_{2})\geq \frac{\underline{\alpha}e^{-2}}{\sqrt{s_0}}\cdot c\asymp s^{-1/2}_{0},
\]
which, by invoking Proposition \ref{Prop:consistency}, implies (ii). 

Thus, by (i), (ii), and the fact that the negative log-likelihood $\ell_{n_{0}}\left(\theta \right) $ is a convex function, Proposition \ref{Prop:consistency} holds with $z^{\ast}= ||\dot{\ell}_{n_{0}}\left( \theta _{0}\right) ||_{\infty }$ and $\lambda _{n_{0}}\asymp \sqrt{\left( \log p\right) /n_{0}}$, yielding that that $\mathcal{E}_{1}$ holds with probability at least $1-p^{-c}$.

For $\mathcal{E}_{2}$, since all conditions in Proposition \ref{Prop:consistency} are satisfied, Lemma 7 in \citet{cai2023statistical} implies that 
\begin{equation*}
\left\vert \left\vert \hat{\theta}-\theta _{0}\right\vert \right\vert_{1}\leq \left( 1+\varsigma \right) \left\vert \left\vert \left( \hat{\theta}-\theta _{0}\right) _{S}\right\vert \right\vert _{1}\leq \left( 1+\varsigma\right) \sqrt{s_{0}}\left\vert \left\vert \left( \hat{\theta}-\theta_{0}\right) _{S}\right\vert \right\vert _{2}\lesssim \sqrt{\frac{s^{2}_{0}\left(\log p\right) }{n_{0}}}
\end{equation*}
holds with probability at least $1-p^{-c}$. Then, $\mathcal{E}_{2}$ holds with probability at least $1-p^{-c}$.

For $\mathcal{E}_{3}$, let $\Delta:=\hat\theta-\theta_0$ and $S:=\{j:\theta_{0j}\neq 0\}$ with $|S|=s_0$. Since
$\|\Delta\|_1 \le (1+\varsigma)\|\Delta_S\|_1$ and $\|\Delta_{S^c}\|_1 \le \varsigma\|\Delta_S\|_1$ as we derived above, \ $\Delta\in \mathcal{C}(\varsigma,S)$ where $\mathcal{C}(\varsigma,S)$ denotes the cone defined in Proposition \ref{Prop:consistency}.
Under Assumption \ref{assum:support}, conditional on $Y_i>w_n$ the rows are i.i.d.\ sub-Gaussian with covariance
$\Sigma_{w_n}:=\mathbb E[X_iX_i^\intercal | Y_i>w_n]$ and $\lambda_{\max}(\Sigma_{w_n})\le C$.
Fix $0<\vartheta<1$. Then Theorem~1.6 of \citet{Zhou2009RE} implies that with probability at least $1-2\exp(-c n_0)$, 
\[
\frac{1}{n_0}\sum^{n_0}_{i=1}(X_i^{\intercal}\Delta)^{2}
\leq 2(1+\vartheta)^2\|\Sigma_{w_n}^{1/2}\Delta \|_2^2
\leq 2(1+\vartheta)^2\lambda_{\max}(\Sigma_{w_n})\|\Delta \|_2^2.
\]
Using $\lambda_{\max}(\Sigma_{w_n})\le C$ yields
\[
\frac{1}{n_0}\sum_{i=1}^{n_0}(X_i^\intercal\Delta)^2
\lesssim \|\Delta\|_2^2.
\]
Combining this with $\mathcal{E}_1$ (i.e.\ $\|\Delta\|_2^2\lesssim s_0(\log p)/n_0$) gives
\[
\frac{1}{n_0}\sum_{i=1}^{n_0}(X_i^\intercal(\hat\theta-\theta_0))^2 \lesssim \frac{s_0\log p}{n_0},
\]
so $\mathcal{E}_3$ holds with probability at least $1-p^{-c}$ (since $2e^{-c n_0}\le p^{-c}$ when $n_0\gtrsim \log p$).

\end{proof}

Below,  we cite the auxiliary proposition from the existing literature, which we use to prove our first main theorem.

\begin{proposition}[\citealp{HuangZhang2012}; and \citealp{cai2023statistical}]\label{Prop:consistency} Let $\hat{\theta}=\arg\min_{\theta}\left\{ \ell_{n}(\theta)+\lambda\|\theta\|_{1}\right\} $ be the Lasso estimator for some generalized linear model with true regression coefficient $\theta_{0}$, where the normalized negative
log-likelihood $\ell(\theta)$ is a convex function. Let 
\[
F(\varsigma,S;\psi,\psi_{0})=\inf_{b\in\mathcal{C}(\varsigma,S),\psi_{0}(b)\leq1}\frac{b^{\intercal}\left(\dot{\ell}_{n}(\theta_{0}+b)-\dot{\ell}_{n}(\theta_{0})\right)e^{-\psi_{0}^{2}(b)-\psi_{0}(b)}}{\|b_{S}\|_{1}\psi(b)},
\]
where $S=\{j:\theta_{0j}\neq0\}$, $\psi$ and $\psi_{0}$ are semi-norms, $M_{2}>0$ is a constant, and $$\mathcal{C}(\varsigma,S)=\left\{ b\in\mathbb{R}^{p}:\|b_{S^{c}}\|_{1}\leq\varsigma\|b_{S}\|_{1}\neq0\right\} .$$
Define 
\[
\Omega=\left\{ \frac{\lambda+z^{*}}{(\lambda-z^{*})_{+}}\leq\xi,\frac{\lambda+z^{*}}{F(\varsigma,S;\psi,\psi_{0})}\leq\eta e^{-\eta^{2}-\eta}\right\} 
\]
for some $\eta\leq1/2$ and $z^{*}=\|\dot{\ell}_{n_{0}}(\theta_{0})\|_{\infty}\leq\lambda.$
Then, in the event $\Omega$, we have $\psi(\hat{\theta}-\theta_{0})\leq\frac{(\lambda+z^{*})e^{\eta^{2}+\eta}}{F(\varsigma,S;\psi,\psi_{0})}.$ 
\end{proposition}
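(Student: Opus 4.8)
The plan is to prove this oracle inequality by the standard two‑inequality route for penalized convex $M$‑estimation, using $F(\varsigma,S;\psi,\psi_0)$ as a restricted strong‑convexity (``invertibility'') factor and exploiting the weight $e^{-\psi_0^2(b)-\psi_0(b)}$ as a self‑bounding device that lets a unit‑scale curvature estimate be transported to the actual displacement $b:=\hat\theta-\theta_0$. Concretely, I would obtain an \emph{upper} bound on the symmetrized Bregman divergence $\langle b,\dot\ell_n(\theta_0+b)-\dot\ell_n(\theta_0)\rangle$ from optimality, a \emph{lower} bound from $F$, and then reconcile the two to solve for $\psi(b)$.

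First I would extract the upper bound from the first‑order condition. Since $\hat\theta$ minimizes $\ell_n(\theta)+\lambda\|\theta\|_1$, stationarity gives $\dot\ell_n(\hat\theta)+\lambda\hat z=0$ for some $\hat z\in\partial\|\hat\theta\|_1$. Pairing with $b$ and subtracting $\langle b,\dot\ell_n(\theta_0)\rangle$ yields $\langle b,\dot\ell_n(\hat\theta)-\dot\ell_n(\theta_0)\rangle=-\lambda\langle b,\hat z\rangle-\langle b,\dot\ell_n(\theta_0)\rangle$. Decomposing over $S$ and $S^c$, using $\theta_{0,S^c}=0$ so that $\langle b_{S^c},\hat z_{S^c}\rangle=\|b_{S^c}\|_1$ while $|\langle b_S,\hat z_S\rangle|\le\|b_S\|_1$, and bounding $|\langle b,\dot\ell_n(\theta_0)\rangle|\le z^*\|b\|_1$, I get
\[
\langle b,\dot\ell_n(\theta_0+b)-\dot\ell_n(\theta_0)\rangle\ \le\ (\lambda+z^*)\|b_S\|_1-(\lambda-z^*)\|b_{S^c}\|_1 .
\]
As the left side is nonnegative by convexity and $\lambda\ge z^*$ on $\Omega$, dropping the last term gives the upper bound, while rearranging gives cone membership $\|b_{S^c}\|_1\le\frac{\lambda+z^*}{\lambda-z^*}\|b_S\|_1\le\varsigma\|b_S\|_1$, i.e. $b\in\mathcal{C}(\varsigma,S)$ (taking $\varsigma=\xi$).

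Next I would install the matching lower bound via a rescaling. Because $\mathcal{C}(\varsigma,S)$ is scale‑invariant, for every $t\in(0,1]$ with $t\psi_0(b)\le1$ the point $tb$ is admissible in the infimum defining $F$; applying the definition to $tb$, clearing the homogeneous factors, and writing $G(t)=\langle b,\dot\ell_n(\theta_0+tb)-\dot\ell_n(\theta_0)\rangle$ gives $G(t)\ge F\,t\,\|b_S\|_1\,\psi(b)\,e^{t^2\psi_0(b)^2+t\psi_0(b)}$. Convexity makes $G$ nondecreasing with $G(0)=0$, so $G(t)\le G(1)\le(\lambda+z^*)\|b_S\|_1$; cancelling $\|b_S\|_1$ yields the master inequality
\[
F\,t\,\psi(b)\,e^{t^2\psi_0(b)^2+t\psi_0(b)}\ \le\ \lambda+z^*,\qquad t\in(0,1],\ t\psi_0(b)\le1 .
\]

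The crux, and the step I expect to be the main obstacle, is resolving this implicitly self‑referential bound, since $\psi_0(b)$ sits inside the exponential and a priori one only knows $\psi_0(b)<\infty$, not $\psi_0(b)\le1$, so $F$ cannot be applied to $b$ itself at $t=1$. I would break the circularity by specializing to $\psi=\psi_0$ and arguing by contradiction: if $\psi_0(b)>\eta$, then $t=\eta/\psi_0(b)\in(0,1]$ is legitimate (indeed $t\psi_0(b)=\eta\le1/2$), and the master inequality collapses to $F\eta\,e^{\eta^2+\eta}\le\lambda+z^*$, contradicting the condition $\frac{\lambda+z^*}{F}\le\eta e^{-\eta^2-\eta}$ that defines $\Omega$; hence $\psi_0(b)\le\eta\le1$. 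The subtlety is precisely that the raw inequality also admits a spurious large‑scale root, which this rescaling (continuity along $t\to0$) rules out by selecting the branch anchored at the origin. With $\psi_0(b)\le\eta$ now in hand, $t=1$ is admissible for the original semi‑norm $\psi$, and the master inequality gives $\psi(b)\le\frac{\lambda+z^*}{F}e^{-\psi_0(b)^2-\psi_0(b)}\le\frac{(\lambda+z^*)e^{\eta^2+\eta}}{F}$, which is the claimed bound. Throughout I would lean on the Huang–Zhang and Cai et al. bookkeeping for the exact handling of the semi‑norms and the admissibility of the $\psi=\psi_0$ specialization.
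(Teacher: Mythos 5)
The paper never proves this proposition—it is quoted from the literature (see the sentence preceding it: ``we cite the auxiliary proposition from the existing literature''), so there is no internal proof to compare against; your reconstruction is, in substance, the standard argument behind the cited results of \citet{HuangZhang2012} and \citet{cai2023statistical}, and it is correct. Your two halves are exactly the right ones: the KKT condition with $z^{*}\leq\lambda$ yields both the cone membership $b=\hat{\theta}-\theta_{0}\in\mathcal{C}(\varsigma,S)$ with $\varsigma=(\lambda+z^{*})/(\lambda-z^{*})\leq\xi$ (the proposition's mixing of $\xi$ and $\varsigma$, like the unused constant $M_{2}$, is a transcription blemish in the paper, not a substantive condition) and the upper bound $\langle b,\dot{\ell}_{n}(\theta_{0}+b)-\dot{\ell}_{n}(\theta_{0})\rangle\leq(\lambda+z^{*})\|b_{S}\|_{1}$; then plugging $tb$ into the definition of $F$, using monotonicity of $t\mapsto\langle b,\dot{\ell}_{n}(\theta_{0}+tb)\rangle$ to get $G(t)\leq G(1)$, and running the contradiction at $t=\eta/\psi_{0}(b)$ breaks the self-referential loop exactly as in the source proofs. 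The one caveat you yourself flag is real but attributable to the statement rather than to you: as written, $\Omega$ only constrains $F(\varsigma,S;\psi,\psi_{0})$, whereas your contradiction step needs the same inequality for the specialization $F(\varsigma,S;\psi_{0},\psi_{0})$; in \citet{HuangZhang2012} the hypothesis is indeed imposed on the $\psi_{0}$-version (with the conclusion then stated for general $\psi$), so your reading matches the intended result. Two small remarks: your final chain actually establishes the sharper bound $\psi(b)\leq(\lambda+z^{*})e^{-\psi_{0}^{2}(b)-\psi_{0}(b)}/F\leq(\lambda+z^{*})/F$, from which the stated bound with the factor $e^{\eta^{2}+\eta}$ follows a fortiori; and you should dispose of the degenerate case $\|b_{S}\|_{1}=0$ (which on $\Omega$ forces $b=0$ via $(\lambda-z^{*})\|b_{S^{c}}\|_{1}\leq0$) before dividing by it, a one-line fix.
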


\subsection{Proof of Theorem \textup{\ref{Thm:normality2}}}
\begin{proof}[Proof of Theorem \textup{\ref{Thm:normality2}}]
For each $j=1,\dots,p$, the mean value expansion yields
\begin{eqnarray*}
&&\tilde{\theta}_{j}-\theta_{0j} \\
&=&\frac{1}{K}\sum_{k=1}^{K}\left( \tilde{\theta}_{j,k}-\theta_{0j}\right)\\
&=&\frac{1}{K}\sum_{k=1}^{K}\left\{ \hat{\theta}_{j,k}-\theta_{0j}-\frac{\hat{u}_{j,k}^{\intercal }}{n_{k}}\sum_{i=1}^{n_{k}}\left\{ \exp \left( X_{i}^{\intercal }\hat{\theta}_{k}\right) \log \left( Y_{i}/w_{n}\right)-1\right\} X_{i}\right\} \\
&=&-\frac{1}{K}\sum_{k=1}^{K}\frac{\hat{u}_{j,k}^{\intercal }}{n_{k}}\sum_{i=1}^{n_{k}}\left\{ \exp \left( X_{i}^{\intercal }\theta _{0}\right) \log \left( Y_{i}/w_{n}\right) -1\right\} X_{i} \\
&&-\frac{1}{K}\sum_{k=1}^{K}\left( \frac{\hat{u}_{j,k}^{\intercal }}{n_{k}}\sum_{i=1}^{n_{k}}\exp \left( X_{i}^{\intercal }\theta _{0}\right) \log \left( Y_{i}/w_{n}\right) X_{i}X_{i}^{\intercal }-e_{j}\right) \left( \hat{\theta}_{k}-\theta _{0}\right) \\
&&+\frac{1}{K}\sum_{k=1}^{K}\left( \frac{\hat{u}_{j,k}^{\intercal }}{n_{k}}\sum_{i=1}^{n_{k}}\log \left( Y_{i}/w_{n}\right) X_{i}\exp \left(X_{i}^{\intercal }\hat{\theta}_{k}+tX_{i}^{\intercal }\left( \theta _{0}-\hat{\theta}_{k}\right) \right) \cdot \left[ X_{i}^{\intercal }\left( \hat{\theta}_{k}-\theta _{0}\right) \right] ^{2}\right) \\
&\equiv &-I_{1}-I_{2}+I_{3}
\end{eqnarray*}
for some $t\in (0,1)$. 
We first show that $\sqrt{n_{0}}I_{2}=o_{p}(1)$ and $\sqrt{n_{0}}I_{3}=o_{p}(1)$.

For $I_{2}$, note that $n_{0}=Kn_{k}$ and 
\begin{eqnarray*}
I_{2} &\leq &\left\vert \left\vert \frac{1}{K}\sum_{k=1}^{K}\left( \frac{\hat{u}_{j,k}^{\intercal }}{n_{k}}\sum_{i=1}^{n_{k}}\exp \left(X_{i}^{\intercal }\theta _{0}\right) \log \left( Y_{i}/w_{n}\right) X_{i}X_{i}^{\intercal }-e_{j}\right) \right\vert \right\vert _{\infty}\max_{1\leq k\leq K}\left\vert \left\vert \hat{\theta}_{k}-\theta_{0}\right\vert \right\vert _{1} \\
&=&O_{p}\left( \gamma _{1n_{0}}\cdot \max_{1\leq k\leq K}\left\vert \left\vert \hat{\theta}_{k}-\theta _{0}\right\vert \right\vert _{1}\right)=O_{p}\left( \frac{s_{0}\log p}{n_{0}}\right) ,
\end{eqnarray*}
where the first equality follows from Lemma \ref{lem2}, and the second equality follows from Assumption \ref{assum:rate_tuning} and Theorem \ref{Thm:consistency} so we have $\sqrt{n_{0}}I_{2}=o_{p}(1)$ from the condition in the theorem.

For $I_{3}$, define
\begin{equation*}
\Delta _{i}=\log \left( Y_{i}/w_{n}\right) \exp \left( X_{i}^{\intercal }\hat{\theta}_{k}+tX_{i}^{\intercal }( \theta _{0}-\hat{\theta}_{k}) \right) \cdot \left[ X_{i}^{\intercal }( \hat{\theta}_{k}-\theta _{0} ) \right] ^{2}.
\end{equation*}
For each $j=1,\dots,p$, Cauchy-Schwartz inequality yields
\begin{eqnarray*}
\left\vert \frac{1}{\sqrt{n_{0}}}\sum_{k=1}^{K}\hat{u}_{j,k}^{\intercal}\sum_{i=1}^{n_{k}}X_{i}\Delta _{i}\right\vert &\leq &\max_{1\leq i\leq n_{k}}\left\vert \hat{u}_{j,k}^{\intercal }X_{i}\right\vert \cdot \left\vert \frac{1}{\sqrt{n_{0}}}\sum_{k=1}^{K}\sum_{i=1}^{n_{k}}\Delta_{i}\right\vert \\
&\lesssim &C\frac{\gamma _{2n_{0}}}{\sqrt{n_{0}}}\sum_{k=1}^{K}\sum_{i=1}^{n_{k}}\left[ X_{i}^{\intercal }\left( \hat{\theta}_{k}-\theta_{0}\right) \right] ^{2}\left( 1+o_{p}(1)\right) \\
&\lesssim &C\frac{\gamma _{2n_{0}}s_{0}\log p}{\sqrt{n_{0}}} \\
&=&O_{p}\left( \frac{s_{0}\log p\sqrt{\log n_{0}}}{\sqrt{n_{0}}}\right)=o_{p}(1),
\end{eqnarray*}
where the second inequality follows from the constraint $\max_{1\leq i\leq n_{k}}\vert \hat{u}_{j,k}^{\intercal }X_{i}\vert \leq \gamma_{2n_{0}}$ and Lemma \ref{lem5}, the third inequality follows from Theorem \ref{Thm:normality1}, the first equality follows because of Assumption \ref{assum:rate_tuning}.(iii), and the second equality follows from the condition in the theorem. 
Hence, $\sqrt{n_{0}}I_{3}=o_{p}(1)$.

Next, we derive the asymptotic normality result based on $I_{1}$. Let 
\begin{align*}
 \Psi_{i}&=\left\{ \exp(X_{i}^{\intercal}\theta_{0})\log(Y_{i}/w_{n})-1\right\} X_{i}, \\
 S_{j,k}&=\frac{1}{n_{k}}\sum_{i\in\mathcal{D}_{k}}\hat{u}_{j,k}^{\intercal}\Psi_{i}.
\end{align*}
For each fold $k$, define the training $\sigma-$field $\mathcal{F}_{k}:=\sigma\left(\{X_{i}:i\in\mathcal{D}_{k}\}\cup\{(X_{i},Y_{i}),i\in\mathcal{D}^{c}_{k}\}\right).$
Then $\hat{\theta}_{k}$, $\hat{u}_{j,k}$ are $\mathcal{F}_{k}$-measurable
given $\mathcal{F}_{k}$; the remaining randomness on $\mathcal{D}_{k}$
is only $Y_{i}$ conditional on $X_i$. 
Moreover, conditional on $\mathcal{F}_{k}$, 
\[
Var(\hat{u}_{j,k}^{\intercal}\Psi_{i}|\mathcal{F}_{k})=\hat{u}_{j,k}Var(\Psi_{i}|\mathcal{F}_{k})\hat{u}_{j,k}=\sigma^{2}(X_{i};\theta_{0})(\hat{u}_{j,k}^{\intercal}X_{i})^{2}
\]
where $\sigma^{2}(X;\theta):=Var\left(\exp(X^{\intercal}\theta)\log(Y/w_{n})-1|Y>w_n,X\right).$ 

Denote the average conditional variance 
\[
v_{j,k}:=\frac{1}{n_{k}}\sum_{i\in\mathcal{D}_{k}}Var(\hat{u}_{j,k}^{\intercal}\Psi_{i}|\mathcal{F}_{k})=\frac{1}{n_{k}}\sum_{i\in\mathcal{D}_{k}}\sigma^{2}(X_{i};\theta_{0})(\hat{u}_{j,k}^{\intercal}X_{i})^{2}.
\]

Let $\xi_{i,k}=\hat{u}_{j,k}^{\intercal}\Psi_{i}$, so $\mathbb{E}[\xi_{i,k}|\mathcal{F}_{k}]=0$
and $Var(\xi_{i,k}|\mathcal{F}_{k})=\sigma^{2}(X_{i};\theta_{0})(\hat{u}_{j,k}^{\intercal}X_{i})^{2}$.
Because of Assumptions \ref{assum:shape} and \ref{assum:support} and the property of sub-exponential variable, $\exp(X_{i}^{\intercal}\theta_{0})\log(Y_{i}/w_{n})-1$
has a finite $2+\delta$ moment for some $\delta>0$, then for some
constant $C$, $\mathbb{E}\left[|\xi_{i,k}|^{2+\delta}|\mathcal{F}_{k}\right]\lesssim|\hat{u}_{j,k}^{\intercal}X_{i}|^{2+\delta}$.
Since $\max_{i\in\mathcal{D}_{k}}|\hat{u}_{j,k}^{\intercal}X_{i}|\lesssim\sqrt{\log n_{0}}$
is guaranteed by Lemma \ref{lem2}, we have 
\[
\sum_{i\in\mathcal{D}_{k}}\mathbb{E}\left[|\xi_{i,k}|^{2+\delta}|\mathcal{F}_{k}\right]\lesssim n_{k}(\log n_{0})^{1+\delta/2}.
\]
Meanwhile, $\sum_{i\in\mathcal{D}_{k}}Var(\xi_{i,k}|\mathcal{F}_{k})=n_{k}v_{j,k}.$

Hence, 
\[
\frac{\sum_{i\in\mathcal{D}_{k}}\mathbb{E}(|\xi_{i,k}|^{2+\delta}|\mathcal{F}_{k})}{\left(\sum_{i\in\mathcal{D}_{k}}Var(\xi_{i,k}|\mathcal{F}_{k})\right)^{1+\delta/2}}\lesssim\frac{(\log n_{0})^{1+\delta/2}}{n_{k}^{\delta/2}}\overset{p}{\rightarrow}0
\]
provided $n_{k}^{\delta/2}\gg(\log n_{0})^{1+\delta/2}$
hold.

By the conditional Lindeberg CLT,
\[
\frac{1}{\sqrt{\sum_{i\in\mathcal{D}_{k}}Var(\xi_{i,k}|\mathcal{F}_{k})}}\sum_{i\in\mathcal{D}_{k}}\xi_{j,k}|\mathcal{F}_{k}\overset{d}{\rightarrow}\mathcal{N}(0,1).
\]
Since $\sqrt{n_{k}}S_{j,k}=\frac{1}{\sqrt{n_{k}}}\sum_{i\in\mathcal{D}_{k}}\xi_{i,k}$,
it implies 
\[
\sqrt{n_{k}}S_{j,k}|\mathcal{F}_{k}\overset{d}{\rightarrow}\mathcal{N}(0,v_{j,k}).
\]

Define $S_{j}=\frac{1}{K}\sum_{k=1}^{K}S_{j,k}$. Because
$\mathcal{D}_{k}$ are disjoint, given $\mathcal{F}:=(\mathcal{F}_{1},\dots,\mathcal{F}_{K})$,
the sums $\{\sqrt{n_{k}}S_{j,k}\}_{k}$ are independent. Then 
\[
\left(\sqrt{n_{1}}S_{j,1},\dots,\sqrt{n_{k}}S_{j,K}\right)|\mathcal{F}\overset{d}{\rightarrow}\mathcal{N}(0,\mbox{diag}(v_{j,1},\dots,v_{j,K})),
\]
and 
\[
\sqrt{n_{0}}S_{j}\overset{d}{\rightarrow}\mathcal{N}(0,V_{1j}^{o}),
\]
where $V_{1j}^{o}:=\frac{1}{K}\sum_{k=1}^{K}v_{j,k}$ and 
\[
v_{j,k}=u_{j,k}^{\intercal}\left(\frac{1}{n_k}\sum_{i\in \mathcal{D}_{k}}X_{i}X^{\intercal}_{i}\right)u_{j,k}+o_p(1).
\]

Next, we need to show $\hat{V}_{1j}/V^{o}_{1j}\overset{p}{\rightarrow}1$. Fix $k\in \{1,\dots,K\}$. Conditional on $\mathcal{F}_{k}$, $\hat{u}_{j,k}$ is measurable w.r.t. the data $\mathcal{D}^{c}_{k}$, hence it is fixed when taking averages over $\mathcal{D}_{k}$. Thus, by the LLN on $\mathcal{D}_{k}$, and using the same argument in Lemma \ref{lem2} and 
and Lemma 3 (iii), 
\[
v_{j,k}=\frac{1}{n_{k}}\sum_{i\in\mathcal{D}_{k}}\sigma^{2}(X_{i};\theta_{0})(\hat{u}_{j,k}^{\intercal}X_{i})^{2}=\hat{u}_{j,k}^{\intercal}\Sigma_{w_{n}}\hat{u}_{j,k}+o_{p}(1)
\]
conditionally on $\mathcal{F}_{k}$. Moreover, the maximal inequality from Lemma \ref{lem2} implies  
\[
\hat{u}_{j,k}^{\intercal}\left(\frac{1}{n_{k}}\sum_{i\in\mathcal{D}_{k}}X_{i}X_{i}^{\intercal}\right)\hat{u}_{j,k}=v_{j,k}+o_{p}(1),
\]
so for 
\[
\hat{V}_{1j}=\frac{1}{K^{2}}\sum^{K}_{k=1}\frac{n_0}{n_k}\hat{u}_{j,k}^{\intercal}\left(\frac{1}{n_{k}}\sum_{i\in\mathcal{D}_{k}}X_{i}X_{i}^{\intercal}\right)\hat{u}_{j,k},
\]
we obtain 
\[
\hat{V}_{1j}=\frac{1}{K^{2}}\sum^{K}_{k=1}\frac{n_0}{n_k}v_{j,k}+o_p(1)=V^{o}_{1j}+o_p(1),
\]
and therefore $\hat{V}_{1j}/V_{1j}^{o}\overset{p}{\rightarrow}1$. 

Thus, by Slutsky's theorem, 
\[
\sqrt{n_{0}}\hat{V}_{1j}^{-1/2}\left(\frac{1}{K}\sum_{k=1}^{K}S_{j,k}\right)\overset{d}{\rightarrow}\mathcal{N}(0,1),
\]
and the conclusion follows. 
\end{proof}

\subsection{Proof of Corollary \textup{\ref{Thm:normality1}}}
\begin{proof}[Proof of Corollary \textup{\ref{Thm:normality1}}]

For each $j=1,...,p,$ the mean value expansion yields
\begin{eqnarray*}
\tilde{\theta}_{j}-\theta_{0j} &=&\hat{\theta}_{j}-\theta_{0j}-\frac{\hat{u}_{j}^{\intercal }}{n_{0}/2}\sum_{i=1}^{n_{0}/2}\left\{ \exp \left(X_{i}^{\intercal }\hat{\theta}\right) \log \left( Y_{i}/w_{n}\right)-1\right\} X_{i} \\
&=&-\frac{\hat{u}_{j}^{\intercal }}{n_{0}/2}\sum_{i=1}^{n_{0}/2}\left\{ \exp\left( X_{i}^{\intercal }\theta _{0}\right) \log \left( Y_{i}/w_{n}\right)-1\right\} X_{i} \\
&&-\left( \frac{\hat{u}_{j}^{\intercal }}{n_{0}/2}\sum_{i=1}^{n_{0}/2}\log\left( Y_{i}/w_{n}\right) \left\{ \exp \left( X_{i}^{\intercal }\theta_{0}\right) \right\} X_{i}X_{i}^{\intercal }-e_{j}\right) \left( \hat{\theta}-\theta _{0}\right) \\
&&+\frac{\hat{u}_{j}^{\intercal }}{n_{0}/2}\sum_{i=1}^{n_{0}/2}\log \left(Y_{i}/w_{n}\right) X_{i}\left\{ \exp \left( X_{i}^{\intercal }\hat{\theta}+tX_{i}^{\intercal }\left( \theta _{0}-\hat{\theta}\right) \right) \right\} \left[ X_{i}^{\intercal }\left( \hat{\theta}-\theta _{0}\right) \right] ^{2}
\end{eqnarray*}
for some $t \in (0,1)$.

Let $W_{i}=\hat{u}_{j}^{\intercal }\left\{ \exp \left( X_{i}^{\intercal}\theta _{0}\right) \log \left( Y_{i}/w_{n}\right) -1\right\} X_{i}$. 
Note that $\hat{u}_{j}$ is non-stochastic conditional on $\{X_{i}\}_{i\in \mathcal{D}_{1}}$. 
Then, Lemma \ref{lem3} yields
\begin{eqnarray*}
&&\mathbb{E}\left[ W_{i}|\{X_{i}\}_{i\in \mathcal{D}_{1}},Y_{i}>w_{n}\right] \\
&=&\mathbb{E}\left[ \hat{u}_{j}^{\intercal }\left\{ \exp \left( X_{i}^{\intercal }\theta _{0}\right) \mathbb{E}\left[ \log \left( Y_{i}/w_{n}\right) |X_{i},Y_{i}>w_{n}\right] -1\right\} X_{i}|\{X_{i}\}_{i\in \mathcal{D}_{1}},Y_{i}>w_{n}\right] \\
&=&0.
\end{eqnarray*}
Similarly, Lemma \ref{lem3}.(iii) yields 
\begin{eqnarray*}
\mathbb{E}\left[ W_{i}^{2}|\{X_{i}\}_{i\in \mathcal{D}_{1}},X_{i},Y_{i}>w_{n}\right]
=\left( \hat{u}_{j}^{\intercal }X_{i}\right) ^{2}.
\end{eqnarray*}
The rest follows from a similar argument as in the proof of Theorem \ref{Thm:normality2}. 
\end{proof}

\subsection{Proof of Theorem \textup{\ref{Thm:online}}}
\begin{proof}
The definition of $\tilde{\theta}_{j}^{\text{\texttt{on}}}$ yields 
\begin{eqnarray*}
\tilde{\theta}_{j}^{\text{\texttt{on}}}-\theta_{0j} &=&\hat{\theta}_{j}^{\text{\texttt{on}}}-\theta_{0j}-\frac{\hat{\Xi}_{j}^{\intercal }}{T_{0}}\sum_{t=1}^{T_{0}}\left\{ \exp \left( X_{t}^{\intercal }\hat{\theta}^{\text{\texttt{on}}}\right) \log \left( Y_{t}/\bar{w}\right) -1\right\} X_{t} \\
&=&-\frac{\hat{\Xi}_{j}^{\intercal }}{T_{0}}\sum_{t=1}^{T_{0}}\left\{ \exp \left( X_{t}^{\intercal }\theta _{0}\right) \log \left( Y_{t}/\bar{w}\right) -1\right\} X_{t} \\
&&-\left( \frac{\hat{\Xi}_{j}^{\intercal }}{T_{0}}\sum_{t=1}^{T_{0}}\log \left( Y_{t}/\bar{w}\right) \left\{ \exp \left( X_{t}^{\intercal }\theta_{0}\right) \right\} X_{t}X_{t}^{\intercal }-e_{j}\right) \left( \hat{\theta}^{\text{\texttt{on}}}-\theta _{0}\right) \\
&&+\frac{\hat{\Xi}_{j}^{\intercal }}{T_{0}}\sum_{t=1}^{T_{0}}\log \left(Y_{t}/\bar{w}\right) X_{t}\left\{ \exp \left( X_{t}^{\intercal }\hat{\theta}^{\text{\texttt{on}}}+\tau X_{t}^{\intercal }\left( \theta _{0}-\hat{\theta}^{\text{\texttt{on}}}\right) \right) \right\} \left[ X_{t}^{\intercal }\left( \hat{\theta}^{\text{\texttt{on}}}-\theta _{0}\right) \right] ^{2} \\
&\equiv &-A_{1}-A_{2}+A_{3}
\end{eqnarray*}
for some $\tau\in (0,1)$.

Note that $\hat{\Xi}$ is a function of $\{X_t\}$ only, and hence, conditional on $X_t$, $Z_{t}(\theta_{0})$ is i.i.d. Also, conditional on $Y_{t}>\bar{w}$ and $X_{t}=x$, $\log \left( Y_{t}/\bar{w}\right) $ is exponentially distributed with parameter $\exp (x^{\intercal }\theta _{0})$, yielding that 
\begin{eqnarray*}
\mathbb{E}\left[ \left\{ \exp \left( X_{t}^{\intercal }\theta _{0}\right)\log \left( Y_{t}/\bar{w}\right) -1\right\} X_{t}|Y_{t}>\bar{w}\right] &=&0 \qquad\text{and}\\
\mathbb{E}\left[ \left\{ \exp \left( X_{t}^{\intercal }\theta _{0}\right) \log \left( Y_{t}/\bar{w}\right) -1\right\} ^{2}X_{t}X_{t}^{\intercal }|Y_{t}>\bar{w}\right] &=&\Sigma _{\bar{w}}.
\end{eqnarray*}%
Finally, $(\hat{\Xi}\hat{\Sigma}\hat{\Xi}^{\intercal })_{j,j}\overset{p}{\rightarrow }\left( \Xi \Sigma _{\bar{w}}\Xi^{\intercal } \right) _{j,j}$ follows from the proof of \citet[][Theorem 5.2]{chen2020statistical}.
In particular, we need to check their Lemmas E.1 and E.2. 
Note that our $X_{t}$ has bounded support in all components, implying that $X_{t}$ is a sub-Gaussian vector. Then, 
\begin{equation*}
\sqrt{T_{0}}A_{1}\overset{d}{\rightarrow }\mathcal{N}\left( 0,\left( \Xi\Sigma _{\bar{w}}\Xi^{\intercal } \right) _{j,j}\right)
\end{equation*}
for each $j=1,...,p$.

We next analyze $A_{2}$. 
Assumption 4 implies that $\{\left( \exp \left(X_{t}^{\intercal }\theta _{0}\right) \log \left( Y_{t}/\bar{w}\right)-1\right) X_{t}X_{t}^{\intercal }\}$ are sub-exponential random variables conditional on $X_{t}$ and $Y_{t}>\bar{w}$. 
Then, applying the concentration inequality for sub-exponential random variables \citep[e.g.,][Proposition 5.16]{vershynin2010introduction}, we have
\begin{equation*}
\left\vert \left\vert \frac{1}{T_{0}}\sum_{t=1}^{T_{0}}\left( \log \left(Y_{t}/\bar{w}\right) \exp \left( X_{t}^{\intercal }\theta _{0}\right)-1\right) X_{t}X_{t}^{\intercal }\right\vert \right\vert _{\infty }\lesssim \sqrt{\left( \log p\right) /T_{0}}
\end{equation*}
with probability at least $1-p^{-c}$.
Furthermore, \citet[][Lemma E.2]{chen2020statistical} establishes $\hat{\Xi}_{j}\overset{p}{\rightarrow }\Xi _{\bar{w}_{j}}$, which satisfies that $%
\left\vert \left\vert \Xi _{\bar{w}_{j}}\right\vert \right\vert _{\infty }<\infty $ from the assumption $C_2^{-1}\leq \lambda _{\min }\left( \Sigma _{\bar{w}}\right) \leq \lambda _{\max }\left( \Sigma _{\bar{w}}\right) \leq C_2$. 
Then, we have
\begin{align*}
\sqrt{T_{0}}\left\vert A_{2}\right\vert & \leq \left\vert \left\vert \frac{\hat{\Xi}_{j}^{\intercal }}{T_{0}}\sum_{t=1}^{T_{0}}\left( \log \left( Y_{t}/\bar{w}\right) \exp \left( X_{t}^{\intercal }\theta _{0}\right)-e_{j}\right) X_{t}X_{t}^{\intercal }\right\vert \right\vert \cdot
\left\vert \left\vert \hat{\theta}^{\text{\texttt{on}}}-\theta_{0}\right\vert \right\vert _{1} \\
& =O_{p}\left( \sqrt{\left( \log p\right) /n_{0}}\cdot \left\vert \left\vert \hat{\theta}^{\text{\texttt{on}}}-\theta _{0}\right\vert \right\vert_{1}\right) \\
& \overset{(1)}{=}O_{p}\left( \frac{s_{0}\log p}{T_{0}}\right) \overset{(2)}{=}o_{p}(1),
\end{align*}
where equality (1) follows from Lemma \ref{lem7}, and equality (2) from the condition of the theorem.

Finally, we analyze $A_{3}$.
For $\tau \in (0,1)$, define 
\begin{equation*}
\Delta _{t}=\log \left( Y_{t}/\bar{w}\right) \left\{ \exp \left(X_{t}^{\intercal }\hat{\theta}^{\text{\texttt{on}}}+\tau X_{t}^{\intercal}\left( \theta _{0}-\hat{\theta}^{\text{\texttt{on}}}\right) \right)\right\} \cdot \left[ X_{t}^{\intercal }\left( \hat{\theta}^{\text{\texttt{on}}}-\theta _{0}\right) \right] ^{2}.
\end{equation*}
A similar argument to that in the proof of Lemma \ref{lem5} yields
\begin{equation*}
\frac{1}{T_{0}}\sum_{t=1}^{T_{0}}\left( \Delta _{t}-\left[ X_{t}^{\intercal }\left( \hat{\theta}^{\text{\texttt{on}}}-\theta _{0}\right) \right]^{2}\right) =o_{p}(1).
\end{equation*}
It follows that
\begin{align*}
\sqrt{T_{0}}\left\vert A_{3}\right\vert & =\left\vert \frac{1}{\sqrt{T_{0}}}\sum_{t=1}^{T_{0}}\hat{\Xi}_{j}^{\intercal }X_{t}\Delta _{t}\right\vert \leq \max_{1\leq t\leq T_{0}}\left\vert \hat{\Xi}_{j}^{\intercal}X_{t}\right\vert \cdot \left\vert \frac{1}{\sqrt{T_{0}}}\sum_{t=1}^{T_{0}}\Delta _{t}\right\vert \\
& \overset{(1)}{\lesssim }\frac{\sqrt{\log T_{0}}}{\sqrt{T_{0}}}\sum_{t=1}^{T_{0}}\left[ X_{t}^{\intercal }\left( \hat{\theta}^{\text{\texttt{on}}}-\theta _{0}\right) \right] ^{2}  
=\frac{\sqrt{\log T_{0}}}{\sqrt{T_{0}}}\left( \hat{\theta}^{\text{\texttt{on}}}-\theta _{0}\right) ^{\intercal }X^{\intercal }X\left( \hat{\theta}^{\text{\texttt{on}}}-\theta _{0}\right) \\
& \lesssim \sqrt{T_{0}}\gamma _{2n_{0}}\left\vert \left\vert \hat{\theta}^{\text{\texttt{on}}}-\theta _{0}\right\vert \right\vert _{2}^{2} \, \overset{(2)}{\lesssim }\frac{\sqrt{\log T_{0}}s_{0}\log p}{\sqrt{T_{0}}} \, \overset{(3)}{=}o_{p}(1),
\end{align*}
where inequality (1) is by $\max_{1\leq t\leq T_{0}}\left\vert \hat{\Xi}_{j}^{\intercal}X_{t}\right\vert \lesssim \sqrt{\log T_{0}}$, which in turn follows from that $X_{t}$ has a compact support and hence sub-Gaussian; inequality (2) is due to Lemma \ref{lem6}; and equality (3) follows from the condition in the theorem.
\end{proof}

\section{Auxiliary Lemmas and Their Proofs}\label{app:lemma}

\begin{lemma}\label{lem1} Define
\begin{equation*}
Z_{i,j}=\left[ \alpha \left( X_{i}\right) \log \left(Y_{i}/w_{n}\right) -1\right] X_{i,j},
\end{equation*}%
and suppose that Assumptions \ref{assum:shape} and \ref{assum:support} hold. Then, for all $u$,
\begin{equation*}
\mathbb{P}\left( \left\vert Z_{i,j}\right\vert >u|Y_{i}>w_{n}\right) \leq C_{1}\exp \left( -C_{2}u\right)
\end{equation*}%
for some finite constants $C_{1}$ and $C_{2}$ which do not depend on $w_{n}$.
\end{lemma}

\begin{proof}[Proof of Lemma \ref{lem1}]
On the event that $\{X_{i,j}=0\}$, $Z_{n_{0},i,j}=0$ and hence the lemma follows trivially. 
Now consider the event $\{X_{i,j}\neq 0\}$. For any $u>0$, 
\begin{eqnarray*}
&&\mathbb{P}\left( \left\vert Z_{i,j}\right\vert >u|Y_{i}>w_{n}\right)\\
&=&\mathbb{E}\left[ \mathbb{P}\left( \left\vert Z_{i,j}\right\vert >u|X_{i},Y_{i}>w_{n}\right) |Y_{i}>w_{n}\right] \\
&=&\mathbb{E}\left[ \mathbb{P}\left( \left. \alpha \left( X_{i}\right) \log \left( Y_{i}/w_{n}\right) >1+\frac{u}{\left\vert X_{i,j}\right\vert }\right\vert X_{i},Y_{i}>w_{n}\right) \right] \\
&&+\mathbb{E}\left[ \mathbb{P}\left( \left. \alpha \left( X_{i}\right) \log \left( Y_{i}/w_{n}\right) <1-\frac{u}{\left\vert X_{i,j}\right\vert }\right\vert X_{i},Y_{i}>w_{n}\right) \right] \\
&=&P_{1}(u)+P_{2}(u)\text{.}
\end{eqnarray*}

For $P_{1}\left( u\right) $, Assumption \ref{assum:shape} implies that for any $x\in \mathbb{R}^{\dim \{X\}}$,
\begin{eqnarray*}
&&\mathbb{P}\left( \left. \alpha \left( x\right) \log \left( Y/w_{n}\right) >\left( 1+\frac{u}{\left\vert x_{j}\right\vert }\right) \right\vert X=x,Y>w_{n}\right) \\
&=&\mathbb{P}\left( \left. \frac{Y}{w_{n}}>\exp \left( \frac{1}{\alpha \left( x\right) }\left( 1+\frac{u}{\left\vert x_{j}\right\vert }\right) \right) \right\vert X=x,,Y>w_{n}\right) \\
&=&e^{-\left( 1+u/\left\vert x_{j}\right\vert \right) }
\end{eqnarray*}
where $x_{j}$ denote the $j$th component of the vector $x$. Given that $\left\vert X_{i,j}\right\vert $ has a bounded support, we proceed with $\left\vert X_{i,j}\right\vert \leq 1$ without loss of generality. 
Let $C$ denote a generic constant, whose value could change line by line. It follows that
\begin{eqnarray*}
P_{1}(u) &=&\int_{0}^{1}e^{-(1+u/x)} f_{X_{i,j}|Y_{i}>w_{n}}\left( x\right) dx \\
&\leq &\bar{f}\int_{0}^{1}e^{-(1+u/x)}dx  \\
&\leq &C_{1}e^{-C_{2}u},
\end{eqnarray*}%
where the first inequality is from $f_{X_{i,j}|Y_{i}>w_{n}}<\bar{f}$ (Assumption \ref{assum:support}), and the second inequality is by direct calculation with $C_{1}=2\bar{f}e^{-1}$ and $C_{2}=1$.

For $P_{2}\left( u\right) $, since $\alpha \left( X_{i}\right) >0$ and $Y_{i}>w_{n}$, 
\begin{equation*}
P_{2}(u)\leq \mathbb{P}\left( \left. \left\vert X_{i,j}\right\vert >u\right\vert Y_{i}>w_{n}\right) .
\end{equation*}
The fact that $\left\vert X_{i,j}\right\vert \leq M_j$ (conditional on $Y_{i}$) implies that $X_{i,j}$ is sub-Gaussian and also sub-exponential, which further implies that $P_{2}(u)\leq C_{1}e^{-C_{2}u}$ with $C_{1}=2$ and $C_{2}=\log 2$. 
The proof is complete by combining $P_{1}\left( u\right) $and $P_{2}\left( u\right) $ and setting $\bar{u}=1$.
\end{proof}
\begin{lemma}\label{lem2}
Suppose that the conditions of Theorem \ref{Thm:normality2} hold. With probability at least $1-p^{-c}-n_{0}^{-c}$, there exists $\hat{u}_{j,k}$ such that for each $j=1,...,p$ and $k=1,...,K$,
\begin{eqnarray}
\left\vert \left\vert \frac{1}{K}\sum_{k=1}^{K}\left( \frac{\hat{u}_{j,k}^{\intercal }}{n_{k}}\sum_{i=1}^{n_{k}}\exp \left( X_{i}^{\intercal}\theta _{0}\right) \log \left( Y_{i}/w_{n}\right) X_{i}X_{i}^{\intercal}-e_{j}\right) \right\vert \right\vert _{\infty } &\leq &\gamma _{1n_{0}}\label{eq:const1} \\
\max_{i\in \mathcal{D}_{k}}\left\vert X_{i}^{\intercal }\hat{u}_{j,k}\right\vert &\leq &\gamma _{2n_{0}}.  \label{eq:const2}
\end{eqnarray}
\end{lemma}

\begin{proof}[Proof of Lemma \ref{lem2}]
Let 
\[
\widehat{\Sigma}_{k}=\frac{1}{n_k}\sum_{i\in \mathcal{D}_k}X_iX^{\intercal}_i
\]
and
\[
\widetilde{\Sigma}_{k}=\frac{1}{n_k}\sum_{i\in \mathcal{D}_{k}}\exp(X^{\intercal}_i\theta_0)\log(Y_i/w_n)X_iX^{\intercal}_i.
\]
Because $M^{-1}\leq \lambda _{\min }\left( \Sigma _{w_{n}}\right) \leq \lambda _{\max }\left(\Sigma _{w_{n}}\right) \leq M$ for all $n$, $\Sigma_{w_n}=\mathbb{E}\left[ X_{i}X_{i}^{\intercal }|Y_{i}>w_{n}\right]$ is invertible. Let the $j$-th column $\Sigma_{w_{n}}^{-1},j=1,...,p $ be $u^*_{j}=\Sigma_{w_n}^{-1}e_j$. 

We will show that (i) with probability at least $1-p^{-c}-n_{0}^{-c}$ the feasible set of the algorithm is nonempty ($u^*_{j}$ is feasible), hence the optimizer $\hat{u}_{j,k}$ exists for every $k$. 
(ii) The resulting $\hat{u}_{j,k}$ satisfies \eqref{eq:const2} by construction, and it satisfies \eqref{eq:const1} by the prescribed rate.

We first verify (i). By Lemma \ref{lem3}(ii), 
\[
\mathbb{E}\left[\left(\exp(X_{i}^{\intercal}\theta_0)\log (Y_i/w_n)-1\right)X_{i}X_{i}^{\intercal}|Y_i>w_n \right]=0,
\]
hence
\[
\mathbb{E}\left[\exp(X_i^{\intercal}\theta_0)\log(Y_i/w_n)X_iX_i^{\intercal}|Y_i>w_n \right]=\mathbb{E}\left[X_iX_i^{\intercal}|Y_i>w_n\right]=\Sigma_{w_{n}}
\]
and therefore $\Sigma_{w_{n}}u^{*}_{j}=e_j$. 

Note that
$\{(\exp \left( X_{i}^{\intercal }\theta _{0}\right)\log \left( Y_{i}/w_{n}\right) -1)X_{i}X_{i}^{\intercal }\}_{i\in \mathcal{D}_{k}}$ are i.i.d. and satisfy%
\begin{eqnarray*}
&&\mathbb{E}\left[ \left( \exp \left( X_{i}^{\intercal }\theta _{0}\right)\log \left( Y_{i}/w_{n}\right) -1\right) X_{i}X_{i}^{\intercal }|Y_{i}>w_{n}\right] \\
&=&\mathbb{E}\left[ X_{i}X_{i}^{\intercal }|Y_{i}>w_{n}\right].
\end{eqnarray*}
Moreover, Assumption 1 and Lemma 1 yield that $\{\left( \exp \left(X_{i}^{\intercal }\theta _{0}\right) \log \left( Y_{i}/w_{n}\right)-1\right) X_{i}X_{i}^{\intercal }\}_{i\in \mathcal{D}_{k}}$ are sub-exponential random variables conditional on $X_{i}$ and $Y_{i}>w_{n}$.

Let
\[
M_{k}=\frac{1}{n_k}\sum_{i\in \mathcal D_{k}}\left[\exp (X_i^{\intercal}\theta_0)\log (Y_i/w_n)X_iX_i^{\intercal}-\Sigma_{w_n}\right]
\]
and $\|M_{k}\|_{\infty,2}=\max_{1\leq r\leq p}\|(M_{k})_{r.}\|_{2}$. 

For a row index $r$, define the mean-zero random vectors in $\mathbb{R}^{p}$ as 
\[
Z^{(r)}_{i}:=\exp (X_i^{\intercal}\theta_0)\log (Y_i/w_n)X_{i,r}X_{i}-\mathbb{E}\left[\exp(X_{i}^{\intercal}\theta_{0})\log (Y_i/w_n)X_{i,r}X_{i}|Y_{i}>w_n\right],
\]
then 
\[
\|M_{k}\|_{\infty,2}=\max_{1\leq r\leq p}\|\frac{1}{n_k}\sum_{i\in \mathcal{D}_{k}}Z^{(r)}_{i}\|_{2}. 
\]
With Assumption \ref{assum:shape} (ii) and \ref{assum:support}, we have $\|X_{i}\|_{\psi_{2}}\leq K_{1}$ and $\|\exp(X_{i}^{\intercal}\theta_{0})\log(Y_i/w_n)\|_{\psi_{1}}\leq K_{2}$ for some positive constants $K_{1}$ and $K_{2}$. Then for any unit $v\in \mathbb{S}^{p-1}$, 
\[
\|v^{\intercal}Z^{(r)}_{i}\|_{\psi_{1}}\lesssim K_{2}\|X_{i,r}\|_{\psi_{2}}\|v^{\intercal}X_{i}\|_{\psi_{2}}\lesssim K_{2}K^{2}_{1}:=K_{3}
\]
and 
\[
\sup_{\|v\|_{2}=1}Var(v^{\intercal}Z^{(r)}_{i})\lesssim \mathbb{E}\left[(\exp(X_{i}^{\intercal}\theta_0\log(Y_i/w_n))^{2}X^{2}_{i,r}\|X_{i}\|^{2}_{2} \right]\lesssim K^{2}_{4}
\]
where $K_{3}$ and $K_{4}$ are constants that does not depend on $p$ or $n_{0}$. 
Then applying the Bernstein inequality for sums of independent, mean-zero sub-exponential vectors and union bound over $p$ rows,
\begin{equation*}
    \mathbb{P}\left(\max_{1\leq r\leq p}\|\frac{1}{n_{k}}\sum_{i\in \mathcal{D}_{k}}Z^{(r)}_{i}\|_{2}\geq C^{*}\sqrt{\frac{\log p}{n_k}}\right)\leq 2p\cdot\exp\left(-cn_k\cdot \frac{C^{*2}\log p}{n_kK^{2}_{4}}\right)\leq p^{-c_{0}
}\end{equation*}
for some $c_0>0$ when $C^*$ is large enough (in terms of $K_{3},K_{4}$). Therefore, with probability at least $1-p^{-c_{0}}$, 
\[
\|M_{k}\|_{\infty,2}\lesssim \sqrt{\frac{\log p}{n_k}}.
\]

Based on this, we next show $u^{*}_{j}$ makes the constraints feasible (hence $\hat{u}_{j,k}$ exists). 
\begin{align*}
  &\|\frac{1}{n_{k}}\sum_{i\in \mathcal{D}_{k}}\exp(X_{i}^{\intercal}\theta_{0})\log(Y_i/w_n)X_{i}X^{\intercal}_{i}u^{*}_j-e_j\|_{\infty}\\
  &=\|M_{k}u^*_j\|_{\infty}\leq \|M_{k}\|_{\infty,2}\|u^*_j\|_{2}\lesssim \|u^*_j\|_{2}\sqrt{\frac{\log p}{n_k}}.
\end{align*}
Moreover, because $\Sigma_{w_n}$ is symmetric positive definite with 
\[
C^{-1}_{2}\leq \lambda_{\min}(\Sigma_{w_n})\leq \lambda_{\max}(\Sigma_{w_n})\leq C_{2},
\]
and its inverse satisfies 
\[
\|\Sigma^{-1}_{w_n}\|_{op}=\lambda_{\max}(\Sigma^{-1}_{w_n})=\frac{1}{\lambda_{\min}(\Sigma_{w_n})}\leq C_{2},
\]
for $u^{*}_j=\Sigma^{-1}_{w_n}e_j$ and $\|e_j\|_{2}=1$, 
\[
\|u^{*}_j\|_{2}=\|\Sigma^{-1}_{w}e_j\|_{2}\leq \|\Sigma^{-1}_{w_n}\|_{op}\|e_j\|_{2}\leq C_{2}.
\]
So 
\[
\left\Vert\frac{1}{n_{k}}\sum_{i\in \mathcal{D}_{k}}\exp(X_{i}^{\intercal}\theta_{0})\log(Y_i/w_n)X_{i}X^{\intercal}_{i}u^{*}_j-e_j\right\Vert_{\infty}\lesssim \sqrt{\frac{\log p}{n_k}},
\]
with probability at least $1-p^{-c}-n^{-c}_{0}$. 
Moreover, with Assumption \ref{assum:support}, 
\[
\mathbb{P}\left(|u^{*\intercal}_{j}X|>t\right)\leq 2\exp \left(-ct^{2}/u^{*\intercal}_{j}\Sigma_{w_n}u^*_j\right).
\]
Since $u^{*\intercal}_j\Sigma_{w_n}u^*_j=e^\intercal_j\Sigma^{-1}_{w_n}e_j\leq C$, a union bound over $n_k$ gives 
\[
\max_{i\in \mathcal{D}_{k}}|X^{\intercal}_iu^{*}_j|=O_p(\sqrt{\log n_k}),
\]
which establishes (i), so the feasible set is nonempty.

We next verify (ii), which is implied if we can show 
\[
\|\hat{u}^{\intercal}_{j,k}\widetilde{\Sigma}_{k}-e^{\intercal}_j\|_{\infty}\lesssim \sqrt{\frac{\log p}{n_0}}
\]
for each $k$.

Start with the decomposition 
\[
\hat{u}^{\intercal}_{j,k}\widetilde{\Sigma}_k-e^{\intercal}_j=\hat{u}^{\intercal}_{j,k}(\widetilde{\Sigma}_k-\Sigma_{w_n})+(\hat{u}^{\intercal}_{j,k}\Sigma_{w_n}-e^{\intercal}_j),
\]
where
\[
\Vert \hat{u}^{\intercal}_{j,k}\widetilde{\Sigma}_k-e^{\intercal}_j\Vert_{\infty}
\leq \Vert \widetilde{\Sigma}_k-\Sigma_{w_n}\Vert_{\infty,2}\|\hat{u}_{j,k}\|_{2}+\Vert \Sigma_{w_n}\hat{u}_{j,k}-e_j\Vert_{\infty}. \]

The first factor $\Vert \widetilde{\Sigma}_k-\Sigma_{w_n}\Vert_{\infty,2}\lesssim \sqrt{(\log p)/n_k}$ by result above. 

To control $\Vert \Sigma_{w_n}\hat{u}_{j,k}-e_j\Vert_{\infty}$, note that $\hat{u}_{j,k}$ by definition satisfies 
$
\|\widehat{\Sigma}_{k}\hat{u}_{j,k}-e_j\|_{\infty}\leq \gamma_{1,n_0}.$
Then
\begin{align*}
    \|\Sigma_{w_n}\hat{u}_{j,k}-e_j\|_{\infty}&\leq\|\hat{\Sigma}_{k}\hat{u}_{j,k}-e_j\|_{\infty}+\|(\Sigma_{w_n}-\hat{\Sigma}_{k})\hat{u}_{j,k}\|_{\infty}\\
    &\leq \gamma_{1n_0}+\|\Sigma_{w_n}-\hat{\Sigma}_{k}\|_{\infty,2}\|\hat{u}_{j,k}\|_{2},
\end{align*}
where with probability at least $1-p^{-c}$, 
\[
\|\Sigma_{w_n}-\hat{\Sigma}_{k}\|_{\infty,2}\lesssim \sqrt{\frac{\log p}{n_k}}.
\]
Finally, $\|\hat{u}_{j,k}\|_{2}=O_p(1)$ on the same high-probability event because the constraint $\max_{i\in \mathcal{D}_{k}}|X^{\intercal}_i\hat{u}_{j,k}|\leq \gamma_{2,n_0}$ keeps the quadratic objective $\frac{1}{n_k}\sum_{i\in D_k}(X_i^\intercal\widehat u_{j,k})^2$ bounded by $\gamma^{2}_{2,n_{0}}$, which further prevents $\hat{u}_{j,k}$ from exploding in $\ell_{2}$-norm.

Therefore, with probability at least $1-p^{-c}-n_{0}^{-c}$,
\[
\|\Sigma_{w_n}\hat{u}_{j,k}-e_j\|_{\infty}\lesssim \sqrt{\frac{\log p}{n_0}}.
\]
Then for each $k$, following a similar argument above and the triangle inequality, 
\begin{align*}
&\left\Vert\frac{\hat{u}^{\intercal}_{j,k}}{n_k}\sum^{n_k}_{i=1} \exp(X^{\intercal}_i\theta_0)\log(Y_i/w_n)X_iX^{\intercal}_i-e_j\right\Vert_{\infty} \\
&\lesssim \left\Vert\frac{1}{n_k}\sum^{n_k}_{i=1} \exp(X^{\intercal}_i\theta_0)\log(Y_i/w_n)X_iX^{\intercal}_i-\Sigma_{w_n}\right\Vert\left\Vert\hat{u}_{j,k}\right\Vert_{2}+\|\Sigma_{w_n}\hat{u}_{j,k}-e_j\|_{\infty}\lesssim \sqrt{\frac{\log p}{n_0}}    
\end{align*}
with probability at least $1-p^{-c}-n_{0}^{-c}$. Averaging cross folds implies that \eqref{eq:const1} is satisfied as $\gamma_{1n_0}\asymp\sqrt{\frac{\log p}{n_{0}}}$, and thus (ii). This completes the proof.

\end{proof}

\begin{lemma}\label{lem3} Suppose that Assumptions \ref{assum:shape} and \ref{assum:support} hold. Then, the following equalities hold. 
\begin{description}
\item[(i)] $\mathbb{E}\left[ \log \left( Y_{i}/w_{n}\right) |X_{i},Y_{i}>w_{n}\right] =\exp \left( -X_{i}^{\intercal }\theta _{0}\right).  $

\item[(ii)] $ \mathbb{E}\left[ \left\{ \exp \left(X_{i}^{\intercal }\theta _{0}\right) \log \left( Y_{i}/w_{n}\right)-1\right\} X_{i}|Y_{i}>w_{n}\right] = 0. $

\item[(iii)] $ \mathbb{E}\left[ X_{i}X_{i}^{\intercal }\left\{ \exp \left( X_{i}^{\intercal }\theta _{0}\right) \log \left(Y_{i}/w_{n}\right) -1\right\} ^{2}|Y_{i}>w_{n}\right] = \mathbb{E}\left[X_{i}X_{i}^{\intercal }|Y_{i}>w_{n}\right].$

\item[(iv)] $ \mathbb{E}\left[ X_{i}X_{i}^{\intercal}\exp \left( X_{i}^{\intercal }\theta _{0}\right) \log \left(Y_{i}/w_{n}\right) |Y_{i}>w_{n}\right] = \Sigma _{w_{n}}.$

\item[(v)] $\mathbb{E}\left[ \left\{ \exp \left( X_{i}^{\intercal }\theta_{0}\right) \log \left( Y_{i}/w_{n}\right) -1\right\} ^{4}|X_{i},Y_{i}>w_{n}\right] =9.$
\end{description}
\end{lemma}

\begin{proof}[Proof of Lemma \ref{lem3}]

Assumption 1 implies that the PDF\ of $Y_{i}$ conditional on $X_{i}=x$ and $Y_{i}>w_{n}$ satisfies
\begin{equation}\label{eq:cond_pdf_approx}
    f_{Y|Y>w_{n},X=x}\left( y\right) = \alpha \left( x\right) \left( y/w_{n}\right) ^{-\alpha \left( x\right)}y^{-1}.
\end{equation}
Using (\ref{eq:cond_pdf_approx}), we have that 
\begin{eqnarray*}
&&\mathbb{E}\left[ \log \left( Y_{i}/w_{n}\right) |X_{i}=x,Y_{i}>w_{n}\right] \\
&=&\int_{w_{n}}^{\infty }\log \left( \frac{y}{w_{n}}\right) f_{Y|Y>w_{n},X=x}\left( y\right) dy \\
&=&\int_{1}^{\infty }\log \left( \tau\right) \tau^{-\alpha \left( x\right) -1}d\tau\times \alpha \left( x\right) \\
&=&\frac{1}{\alpha \left( x\right) } ,
\end{eqnarray*}
where the second equality is by the change of variable $y/w_{n}\rightarrow \tau$. 
Part (i) follows from that $\alpha \left( x\right) =\exp \left( x^{\intercal }\theta _{0}\right) $. 
Parts (ii)-(v) follow from similar derivations and are omitted for simplicity. 

\end{proof}

\begin{lemma}\label{lem4}Under the assumptions of Theorem 1, 
\begin{equation*}
\mathbb{P}\left( \left. \inf_{b\in \mathcal{C}\left( \varsigma ,S\right),\left\vert \left\vert b\right\vert \right\vert _{2}=1}\frac{1}{n_{0}}\sum_{i=1}^{n_{0}}\varphi _{L}\left[ \log \left( Y_{i}/w_{n}\right) \right]\varphi _{L}\left[ \left( b^{\intercal }X_{i}\right) ^{2}\right]>c\right\vert \{Y_{i}>w_{n}\}_{i=1}^{n_{0}}\right) \geq 1-e^{-cn_{0}-\log p}
\end{equation*}
holds for some constants $c$ and $L$.
\end{lemma}

\begin{proof}[Proof of Lemma \ref{lem4}]
The proof follows similarly to the proof of \citet[][Lemma 4]{cai2023statistical}. Define
\begin{equation*}
g_{b}\left( y,x\right) =\varphi _{L}\left[ \log \left( y\right) \right]\varphi _{L}\left[ \left( b^{\intercal }x\right) ^{2}\right] .
\end{equation*}
We need to show

(i) $\mathbb{E}\left[ g_{b}\left( Y_{i}/w_{n},\left( b^{\intercal}X_{i}\right) ^{2}\right) |Y_{i}>w_{n}\right] \geq c/2$ for some universal constant $c>0$, and

(ii) For the random variable 
\begin{equation*}
Z\left( t\right) =\inf_{b\in \mathcal{C}\left( \varsigma ,S\right),\left\vert \left\vert b\right\vert \right\vert _{2}=1,\left\vert \left\vert b\right\vert \right\vert _{1}=t}\left\vert 
\begin{array}{c}
\frac{1}{n_{0}}\sum_{i=1}^{n_{0}}g_{b}\left( Y_{i}/w_{n},\left(b^{\intercal }X_{i}\right) ^{2}\right) - \\ 
\mathbb{E}\left[ g_{b}\left( Y_{i}/w_{n},\left( b^{\intercal }X_{i}\right)^{2}\right) |Y_{i}>w_{n}\right]
\end{array}
\right\vert ,
\end{equation*}
it holds that 
\begin{equation*}
\mathbb{P}\left( \left. Z\left( t\right) \geq c/4+C\sqrt{\frac{\log p}{n_{0}}}t\right\vert \{Y_{i}>w_{n}\}_{i=1}^{n_{0}}\right) \leq c_{1}\exp \left(-c_{2}-c_{3}t^{2}\log p\right) .
\end{equation*}

To show (i), note that on the set $\left\vert \left\vert b\right\vert \right\vert _{2}=1$, Lemma 1 implies that
\begin{eqnarray*}
&&\mathbb{E}\left[ \log \left( Y_{i}/w_{n}\right) \left( b^{\intercal }X_{i}\right) ^{2}|Y_{i}>w_{n}\right] \\
&=&\mathbb{E}\left[ \left( b^{\intercal }X_{i}\right) ^{2}\mathbb{E}\left[\log \left( Y_{i}/w_{n}\right) |Y_{i}>w_{n},X_{i}=x\right] |Y_{i}>w_{n}\right] \\
&\geq &\mathbb{E}\left[ \left( b^{\intercal }X_{i}\right) ^{2}\right] /\underline{\alpha }>c>0\text{.}
\end{eqnarray*}
Then it suffices to show that
\begin{eqnarray*}
c/2
&\geq &\mathbb{E}\left[ \log \left( Y_{i}/w_{n}\right) \left( b^{\intercal}X_{i}\right) ^{2}|Y_{i}>w_{n}\right] -\mathbb{E}\left[ \varphi _{L}\left[\log \left( Y_{i}/w_{n}\right) \right] \varphi _{L}\left[ \left(b^{\intercal }X_{i}\right) ^{2}\right] |Y_{i}>w_{n}\right] \\
&=&\mathbb{E}\left[ \left( \log \left( Y_{i}/w_{n}\right) -\varphi _{L}\left[\log \left( Y_{i}/w_{n}\right) \right] \right) \left( b^{\intercal}X_{i}\right) ^{2}|Y_{i}>w_{n}\right] \\
&&+\mathbb{E}\left[ \varphi _{L}\left[ \log \left( Y_{i}/w_{n}\right) \right]\left( \left( b^{\intercal }X_{i}\right) ^{2}-\varphi _{L}\left[ \left(b^{\intercal }X_{i}\right) ^{2}\right] \right) |Y_{i}>w_{n}\right] \\
&\equiv &A_{1}+A_{2}.
\end{eqnarray*}

For $A_{1}$, the proof of Lemma 3.(i) implies that%
\begin{eqnarray*}
&&\mathbb{E}\left[ \log \left( Y_{i}/w_{n}\right) \cdot 1\left\{ \log \left(Y_{i}/w_{n}\right) >T\right\} |Y_{i}>w_{n},X_{i}=x\right] \\
&=&\int_{w_{n}\exp (L)}^{\infty }\log \left( \frac{y}{w_{n}}\right)f_{Y|Y>w_{n},X=x}\left( y\right) dy \\
&=&\int_{\exp (L)}^{\infty }\log \left( t\right) t^{-\alpha \left( x\right)-1}dt\times \alpha \left( x\right)  \\
&=&\exp \left( -\alpha (x)L\right) \frac{\left( 2+\alpha (x)T(2+\alpha(x)T)\right) }{\alpha \left( x\right) ^{3}} \\
&\leq &c_{0}L^{2}\exp (-c_{1}L).
\end{eqnarray*}%
Therefore, 
\begin{equation*}
A_{1}\leq c_{0}L^{2}\exp (-c_{1}L)\mathbb{E}\left[ \left( b^{\intercal}X_{i}\right) ^{2}|Y_{i}>w_{n}\right] \leq c_{0}L^{2}\exp (-c_{1}L),
\end{equation*}%
which is bounded by $c/2$ by setting a sufficiently large $T$.

For $A_{2}$, since $X_{i,j}$ has a bounded support for all $j$ (Assumption 2), it implies that $X_{i}$ is sub-Gaussian vector. 
Then, we can use Cauchy-Schwartz inequality and the fact that $\varphi _{L}\left( x\right) \leq L$ to obtain that 
\begin{eqnarray*}
A_{2} &\leq &L\mathbb{E}\left[ \left( \left( b^{\intercal }X_{i}\right)^{2}-\varphi _{L}\left[ \left( b^{\intercal }X_{i}\right) ^{2}\right]\right) |Y_{i}>w_{n}\right] \\
&\leq &L\mathbb{E}\left[ \left( b^{\intercal }X_{i}\right) ^{2}\cdot 1\left[\left( b^{\intercal }X_{i}\right) ^{2}>L\right] |Y_{i}>w_{n}\right] \\
&\leq &L\sqrt{\mathbb{E}\left[ \left( b^{\intercal }X_{i}\right)^{4}|Y_{i}>w_{n}\right] }\mathbb{P}^{1/2}\left( \left( b^{\intercal}X_{i}\right) ^{2}>L\right) \\
&\leq &c_{0}L^{2}\exp (-c_{2}L),
\end{eqnarray*}%
which is again bounded by $c/2$ by setting a sufficiently large $L$. Then (i) is established by combining $A_{1}$ and $A_{2}$.

For (ii), the truncation function $\varphi _{T}\left( \cdot \right) $ yields that $\left\vert \left\vert g_{b}\left( y,x\right) \right\vert \right\vert_{\infty }\leq L^{2}$. 
The rest of the proof follows similarly from the proof of (2.11) in \citet{cai2023statistical}. 
\end{proof}

\begin{lemma}\label{lem5} Suppose that the conditions of Theorem 2 hold. Then, we have
\begin{equation*}
\frac{1}{n_{0}}\sum_{k=1}^{K}\sum_{i=1}^{n_{k}}\left( \Delta _{i}-\left[ X_{i}^{\intercal }\left( \hat{\theta}_{k}-\theta _{0}\right) \right]^{2}\right) =o_{p}(1).
\end{equation*}
\end{lemma}

\begin{proof}[Proof of Lemma \ref{lem5}]
Recall the definition
\begin{eqnarray*}
\Delta _{i} &=&\log \left( Y_{i}/w_{n}\right) \exp \left( X_{i}^{\intercal }\hat{\theta}_{k}+tX_{i}^{\intercal }\left( \theta _{0}-\hat{\theta}_{k}\right) \right) \cdot \left[ X_{i}^{\intercal }\left( \hat{\theta}_{k}-\theta _{0}\right) \right] ^{2} \\
&=&\log \left( Y_{i}/w_{n}\right) \exp (X_{i}^{\intercal }\theta _{0})\Xi_{i}\cdot \left[ X_{i}^{\intercal }\left( \hat{\theta}_{k}-\theta_{0}\right) \right] ^{2}
\end{eqnarray*}
for some $t\in (0,1)$, where
\begin{equation*}
\Xi _{i}\equiv \exp \left( X_{i}^{\intercal }( \hat{\theta}_{k}-\theta_{0}) (1-t)\right) .
\end{equation*}
Since $X_{i}$ is a sub-Gaussian vector, Theorem \ref{Thm:consistency} implies that
\begin{eqnarray*}
\Xi _{i} &\leq &\exp \left( C\left\vert \left\vert \hat{\theta}_{k}-\theta_{0}\right\vert \right\vert _{2}^{2}\right) \\
&\leq &\exp \left( C\frac{s_{0}\log p}{n_{0}}\right) \\
&\leq &1+C\frac{s_{0}\log p}{n_{0}},
\end{eqnarray*}
where the last inequality follows from the fact that $e^{x}\leq 1+3x$ for $x\in (0,1)$ and $s_{0}\log p /n_{0}\rightarrow 0$ (Assumption \ref{assum:rate_tuning}).

Since $\hat{\theta}_{k}$ is constructed using the subsample $\mathcal{D}_{k}^{c}$, Lemma \ref{lem3} yields that for $i\in \mathcal{D}_{k}$, $\{\Delta _{i}\}$ are i.i.d. and satisfy that 
\begin{eqnarray*}
&&\mathbb{E}\left[ \left. \left( \Delta _{i}-\left[ X_{i}^{\intercal }\left( \hat{\theta}_{k}-\theta _{0}\right) \right] ^{2}\right) ^{2}\right\vert \mathcal{D}_{k}^{c},X_{i},Y_{i}>w_{n}\right] \\
&=&\mathbb{E}\left[ \left. \left( \log \left( Y_{i}/w_{n}\right) \exp(X_{i}^{\intercal }\theta _{0})\Xi _{i}-1\right) ^{2}\right\vert X_{i},Y_{i}>w_{n}\right] \left[ X_{i}^{\intercal }\left( \hat{\theta}_{k}-\theta _{0}\right) \right] ^{4} \\
&\leq &\left( \Xi _{i}^{2}-2\Xi _{i}+1\right) \left[ X_{i}^{\intercal}\left( \hat{\theta}_{k}-\theta _{0}\right) \right] ^{4} \\
&\leq &C\left( \frac{s_{0}\log p}{n_{0}}\right) \left[ X_{i}^{\intercal }\left( \hat{\theta}_{k}-\theta _{0}\right) \right] ^{4} .
\end{eqnarray*}%
Then by Cauchy-Schwartz inequality and integrating over $X_{i}$, we have
that 
\begin{eqnarray*}
&&\mathbb{E}\left[ \left\vert \Delta _{i}-\left[ X_{i}^{\intercal }\left( \hat{\theta}_{k}-\theta _{0}\right) \right] ^{2}\right\vert |\mathcal{D}_{k}^{c},X_{i},Y_{i}>w_{n}\right] \\
&\leq &C\left( \frac{s_{0}\log p}{n_{0}}\right) ^{1/2}\mathbb{E}\left[\left. \left[ X_{i}^{\intercal }\left( \hat{\theta}_{k}-\theta _{0}\right) \right] ^{2}\right\vert \mathcal{D}_{k}^{c},Y_{i}>w_{n}\right]  \\
&=&o(1).
\end{eqnarray*}
Then the result follows from Markov's inequality. 
\end{proof}

\begin{lemma}\label{lem6} Under the assumptions of Theorem \ref{Thm:online}, we have
\begin{equation*}
\left\vert \left\vert \hat{\theta}^{\text{\texttt{on}}}-\theta_{0}\right\vert \right\vert _{2}\lesssim \sqrt{\frac{s_{0}\left( \log p\right) }{T_{0}}}\text{ and }\left\vert \left\vert \hat{\theta}^{\text{\texttt{on}}}-\theta _{0}\right\vert \right\vert _{1}\lesssim \sqrt{\frac{s_{0}^{2}\left( \log p\right) }{T_{0}}}.
\end{equation*}
\end{lemma}

\begin{proof}[Proof of Lemma \ref{lem6}]

This result follows from Proposition 1 and Lemma 1 in \citet{agarwal2012stochastic}.
To apply these results, we introduce some notation and describe their relations with those in \citet{agarwal2012stochastic}. 
Denote $C$ as a generic universal constant, whose value may vary across lines.

First, the loss function $\mathcal{\bar{L}}\left( \theta \right) $ in \citet{agarwal2012stochastic} becomes 
\begin{eqnarray*}
\mathcal{\bar{L}}\left( \theta \right) &=&\mathbb{E}\left[ \exp \left( X_{t}^{\intercal }\theta \right) \log \left( Y_{t}/\bar{w}\right) -X_{t}^{\intercal }\theta |Y_{t}>\bar{w}\right] \\
&=&\mathbb{E}\left[ \exp \left( X_{t}^{\intercal }\theta \right) \exp (X_{t}^{\intercal }\theta _{0})-X_{t}^{\intercal }\theta |Y_{i}>\bar{w}\right] ,
\end{eqnarray*}
which satisfies their Assumptions 1 (locally Lipschitz) and 2' (locally restricted strong convexity).

Second, denote the stochastic gradient as
\begin{equation*}
g_{t}(\theta )=\left\{ \exp \left( X_{t}^{\intercal }\theta \right) \log
\left( Y_{t}/\bar{w}\right) -1\right\} X_{t},
\end{equation*}%
where recall that we use only the tail data $Y_{t}>\bar{w}$. Note that $g_{t}(\theta )$ is only sub-exponential (Lemma 1) instead of sub-Gaussian, which is the key difference from \citet{agarwal2012stochastic}. 
Define
\begin{equation*}
e_{t}\left( \theta \right) =g_{t}\left( \theta \right) -\mathbb{E}\left[g_{t}\left( \theta \right) \right] .
\end{equation*}%
Instead of bounding $\mathbb{E}\left[ \exp (\left\vert \left\vert e_{t}\left( \theta \right) \right\vert \right\vert _{\infty })\right] $, we now bound $\mathbb{E}\left[ \left\vert \left\vert e_{t}\left( \theta \right)
\right\vert \right\vert _{\infty }^{4}\right] $. Since all components of $X_{t}$ have a compact support, it holds that for some constant $C$ 
\begin{equation*}
\left\vert \left\vert e_{t}\left( \theta \right) \right\vert \right\vert_{\infty }^{4}\leq C\left\{ \exp \left( X_{t}^{\intercal }\theta \right)
\log \left( Y_{t}/\bar{w}\right) -1\right\} ^{4}.
\end{equation*}%
Then, some calculation yields that for any $\theta $ such that $\left\vert \left\vert \theta -\theta _{0}\right\vert \right\vert _{1}\leq R$, 
\begin{eqnarray}
&&\mathbb{E}\left[ \left\vert \left\vert e_{t}\left( \theta \right) \right\vert \right\vert _{\infty }^{4}\right]   \notag \\
&\leq &C\mathbb{E}\left[ \left\{ \exp \left( X_{t}^{\intercal }\theta \right) \log \left( Y_{t}/\bar{w}\right) -1\right\} ^{4}\right]   \notag \\
&=&C\{\mathbb{E}\left[ \exp (4X_{t}^{\intercal }(\theta -\theta _{0}))\right] -4\mathbb{E}\left[ \exp (3X_{t}^{\intercal }(\theta -\theta _{0}))\right]   \notag \\
&&+6\mathbb{E}\left[ \exp (2X_{t}^{\intercal }(\theta -\theta _{0}))\right] -4\mathbb{E}\left[ \exp (X_{t}^{\intercal }(\theta -\theta _{0}))\right] \}+1 \notag \\
&\leq &C\{\exp \{16\left\vert \left\vert \theta -\theta _{0}\right\vert \right\vert _{2}^{2}C^{2}/2\}+4\exp \{9\left\vert \left\vert \theta -\theta_{0}\right\vert \right\vert _{2}^{2}C^{2}/2\}  \label{eq:subGauss} \\
&&+6\exp \{4\left\vert \left\vert \theta -\theta _{0}\right\vert \right\vert_{2}^{2}C^{2}/2\}+4\exp \{\left\vert \left\vert \theta -\theta_{0}\right\vert \right\vert _{2}^{2}C^{2}/2\}  \notag \\
&\leq &16C\exp \{8R^{2}C^{2}\}+1, \label{eq:ebound}
\end{eqnarray}%
where (\ref{eq:subGauss}) is from the fact that $X$ has bounded support implies that it is sub-Gaussian. Accordingly, set 
\begin{equation*}
\sigma ^{2}(R)=\sqrt{ 16C\exp \{8R^{2}C^{2}\}+1 },
\end{equation*}%
yielding that $\mathbb{E}\left[ \left\vert \left\vert e_{t}\left( \theta \right) \right\vert \right\vert _{\infty }^{4}\right] \leq \sigma ^{4}(R)$.

Third, by carefully examining the proof of Proposition 1 in \citet{agarwal2012stochastic}, sub-Gaussianity is only required in their Lemma 7. 
Therefore, we establish Lemma \ref{lem7} below, which is a weaker version of their Lemma 7. 
Then using Proposition 1 and Lemma 1 in \citet{agarwal2012stochastic}, we have that%
\begin{equation}
\left\vert \left\vert \hat{\theta}_{i}-\theta _{0}\right\vert \right\vert_{2}\lesssim \sqrt{s_{0}}\lambda _{i}\text{ and }\left\vert \left\vert \hat{\theta}_{i}-\theta _{0}\right\vert \right\vert _{1}\lesssim s_{0}\lambda_{i},  \label{eq:prop1}
\end{equation}%
where $\hat{\theta}_{i}$ denotes the estimates in the $i$-th epoch. 
Note that their $\left\vert \left\vert \theta _{S^{c}}^{\ast }\right\vert \right\vert _{1}=0$ given our sparsity condition.

By setting the regularization parameter $\lambda _{i}$ as in eq.(34) in \citet{agarwal2012stochastic}, we have that
\begin{equation*}
\lambda _{i}^{2}=\frac{R_{i}C_{1}^{-1}}{s_{0}\sqrt{T_{i}}}\sqrt{e\left( \log p\right) \left( C_{1}^{2}+\sigma ^{2}\left( R_{i}\right) ^{2}\right) +\omega_{i}^{2}\sigma ^{2}\left( R_{i}\right) },
\end{equation*}%
where the constant $C_{1}$ is as in Assumption \ref{assum:online} and $\omega_i^2 = \omega^2+24\log i$ with $\omega$ in Lemma \ref{lem7}. 
Substitute $R_{i}=R_{1}/2^{i/2}$ and $T_{i}\geq Cs_{0}^{2}R_{i}^{-2}$ to obtain that%
\begin{equation*}
\lambda _{K_{T_{0}}}\leq C\frac{R_{1}}{s_{0}2^{K_{T_{0}}/2}}.
\end{equation*}%
To further bound $\lambda _{K_{T_{0}}}$, setting $T_{i}$ as in eq.(32) in \citet{agarwal2012stochastic}, we have that%
\begin{equation*}
T_{0}=\sum_{i=1}^{K_{T_{0}}}T_{i}\geq s_{0}^{2}\left( \log d\right) 2^{K_{T_{0}}},
\end{equation*}%
yielding that%
\begin{equation*}
\lambda _{K_{T_{0}}}\leq \frac{CR_{1}}{s_{0}2^{K_{T_{0}}/2}}\leq C\sqrt{\frac{\log p}{T_{0}}}.
\end{equation*}%
Then combining (\ref{eq:prop1}) finishes the proof. 
\end{proof}

\begin{lemma}\label{lem7}
Denote $\sigma _{i}^{2}=\sigma ^{2}\left( R_{i}\right) $ and $\left\vert \left\vert \theta -\theta _{0}\right\vert \right\vert_{1}\leq R_{i}$. Then, the following statements hold.

\medskip\noindent
(a) With step size $\alpha ^{t}=\alpha /\sqrt{t}$, we have that for any $\omega >0$,
\begin{equation*}
\sum_{t=1}^{T}\alpha ^{t-1}\left\vert \left\vert e_{t}\left( \theta \right) \right\vert \right\vert _{\infty }^{2}\leq \sigma _{i}^{2}\alpha \sqrt{T}+\omega \sigma _{i}^{2}\alpha \sqrt{\log T}
\end{equation*}%
holds with probability at least $1-1/\omega ^{2}$;

\medskip\noindent
(b) Denote $\theta _{t}$ as the solution in the $t$-th iteration. We have that for any $\omega >0$, 
\begin{equation*}
\sum_{t=1}^{T}\left\langle e_{t}\left( \theta \right) ,\theta _{t}-\hat{\theta}_{i}\right\rangle \leq \omega \sigma _{i}R_{i}\sqrt{T}
\end{equation*}
holds with probability at least $1-1/4\omega ^{2}$. 
\end{lemma}

\begin{proof}[Proof of Lemma \ref{lem7}]
To establish (a), we have that for any $w>0$,%
\begin{align*}
& \text{ \ \ \ \ }\mathbb{P}\left( \sum_{t=1}^{T}\alpha ^{t-1}\left\vert \left\vert e_{t}\left( \theta \right) \right\vert \right\vert _{\infty}^{2}>\sigma _{i}^{2}\alpha \sqrt{T}+\omega \sigma _{i}^{2}\alpha \sqrt{\log T}\right) \\
& \leq \text{\ }\mathbb{P}\left( \sum_{t=1}^{T}\alpha ^{t-1}\left\vert \left\vert e_{t}\left( \theta \right) \right\vert \right\vert _{\infty }^{2}>\omega \sigma _{i}^{2}\alpha \sqrt{\log T}\right) \\
& \overset{(1)}{\leq }\frac{\mathbb{E}\left[ \left( \sum_{t=1}^{T}\alpha^{t-1}\left\vert \left\vert e_{t}\left( \theta \right) \right\vert\right\vert _{\infty }^{2}\right) ^{2}\right] }{\omega ^{2}\sigma_{i}^{4}\alpha ^{2}\log T}\overset{(2)}{\leq }\frac{\sum_{t=1}^{T}\alpha^{2\left( t-1\right) }\mathbb{E}\left[ \left\vert \left\vert e^{t}\right\vert \right\vert _{\infty }^{4}\right] }{\omega ^{2}\sigma_{i}^{4}\alpha ^{2}\log T} \\
& \overset{(3)}{\lesssim }\frac{\mathbb{E}\left[ \left\vert \left\vert e_{t}\left( \theta \right) \right\vert \right\vert _{\infty }^{4}\right] }{\omega ^{2}\sigma _{i}^{4}}\overset{(4)}{\lesssim}\frac{1}{\omega ^{2}},
\end{align*}%
where ineq.(1) is by Chebyshev's inequality, ineq.(2) is by Cauchy-Schwartz inequality, ineq.(3) is by the fact that $\sum_{t=1}^{T}\alpha^{2t}\lesssim\alpha ^{2}\log T$, and ineq.(4) is by (\ref{eq:ebound}).

Part (b) is similarly established as follows 
\begin{align*}
& \text{ \ \ \ \ }\mathbb{P}\left( \sum_{t=1}^{T}\left\langle e_{t}\left(\theta \right) ,\theta_{t}-\hat{\theta}_{i}\right\rangle >2\omega R_{i}\sigma _{i}\sqrt{T}\right) \\
& \leq \frac{\sum_{t=1}^{T}\mathbb{E}\left[ \left\langle e_{t}\left( \theta\right) ,\theta_{t}-\hat{\theta}_{i}\right\rangle ^{2}\right] }{4\omega^{2}R_{i}^{2}\sigma _{i}^{2}T} \\
& \leq \frac{\mathbb{E}\left[ \left\vert \left\vert e_{t}\left( \theta \right) \right\vert \right\vert _{\infty }^{2}\right] \left\vert \left\vert \theta_{t}-\hat{\theta}_{i}\right\vert \right\vert _{1}^{2}}{4\omega ^{2}R_{i}^{2}\sigma _{i}^{2}} \leq \frac{1}{4\omega ^{2}}.
\end{align*}
\end{proof}

\end{document}